%% file: main.tex
\documentclass[10pt]{article}
\usepackage[margin=1in]{geometry}
\usepackage{enumitem}
\setlist[itemize]{noitemsep, topsep=-5pt}
\setlist[enumerate]{noitemsep, topsep=-5pt}

\PassOptionsToPackage{numbers, compress}{natbib}

\usepackage[utf8]{inputenc} 
\usepackage[T1]{fontenc}    
\usepackage{hyperref}       
\usepackage{url}            
\usepackage{booktabs}       
\usepackage{amsfonts}       
\usepackage{nicefrac}       
\usepackage{microtype}      
\usepackage{xcolor}         
\usepackage{colortbl}
\usepackage{mathtools}
\usepackage{algorithm}
\usepackage{algpseudocode}
\usepackage{tikz}
\usepackage{caption}
\usepackage{subcaption} 
\usepackage{amsfonts, amsmath, amsthm}       
\usepackage{amssymb}  
\usepackage{multirow} 
\usepackage{booktabs} 
\usepackage{accents}
\usepackage{comment}
\usepackage{wrapfig}
\usepackage{enumitem}
\usepackage{array}
\usepackage{tabularx}
\usepackage{wrapfig}

\usepackage{tabularx}
\newcolumntype{Y}{>{\centering\arraybackslash}X}

\usepackage{natbib} 
\newtheorem{lemma}{Lemma}
\newtheorem{definition}{Definition}
\newtheorem{theorem}{Theorem}

\newtheorem{remark}{Remark}

\newtheorem*{theorem*}{Theorem}
\newtheorem*{proposition*}{Proposition}

\newcommand{\matrixpropag}{{S}}

\newcommand{\sintra}{S^{\mathrm{intra}}}
\newcommand{\sextra}{S^{\mathrm{inter}}}
\newcommand{\sextrac}{S^{\mathrm{inter}}_c}

\newcommand{\newmatrixpropag}{{S_c}}
\newcommand{\preservedspace}{\mathcal{R}}

\title{CoRe-GNN: Multilevel Message passing \\ on Coarsened graphs}

\author{%
 Antonin Joly \\
  CNRS, IRISA, Rennes, FRANCE\\
  \texttt{antonin.joly@irisa.fr} \\
  \and
  Nicolas Keriven\\
  CNRS, IRISA, Rennes, FRANCE\\
  \texttt{nicolas.keriven@cnrs.fr} \\
  \and
  Aline Roumy \\
  INRIA, Rennes, FRANCE\\
  \texttt{aline.roumy@inria.fr}
}

\begin{document}

\maketitle

\begin{abstract}
Training Graph Neural Networks on large graphs is challenged by the memory cost of storing all node representations across layers. We show that several existing scalable approaches can be written as structured modifications of the GNN propagation matrix, providing a unified perspective that exposes their respective limitations.
In particular, graph coarsening replaces it by a low-rank approximation that enables spectral guarantees
but assigns uniform representations to clustered nodes, while Cluster-GCN restricts the propagation matrix to intra-cluster connections that allow efficient batching but sever long-range information. These are complementary failures of the \emph{same} decomposition of the graph into groups of nodes. To obtain the best of both worlds, we propose \textbf{CoRe-GNN}, which performs both propagations in parallel at each layer: a coarsened inter-cluster term capturing long-range structure, and a local intra-cluster term preserving per-node discriminability. We prove that CoRe-GNN inherits analogous approximation guarantees to those of graph coarsening, and introduce a natural cluster-based \emph{batching scheme} that scales to graphs with millions of nodes. 
On node classification benchmarks spanning homophilic, heterophilic, large-scale, and long-range graphs, CoRe-GNN outperforms both graph coarsening and Cluster-GCN baselines. 
Notably, CoRe-GNN reaches competitive accuracy on \emph{long-range} tasks, while remaining memory-efficient through batching.
\end{abstract}

\section{Introduction}
Graph Neural Networks (GNNs) \cite{scarselli2008graph,kipf2016GCN,Bronstein2021} have established themselves as the dominant paradigm for learning on graph-structured data. 
GNNs are deep architectures on graphs that rely on the \textbf{Message-Passing} paradigm \cite{Gilmer2017}: at each layer, the representation $H^{(l)}_i \in \mathbb{R}^{d_l}$ of each node $1\leq i \leq N$, is updated by \emph{aggregating} and \emph{transforming} the representations of its neighbors at the previous layer $\{H^{(l-1)}_j\}_{j \in \mathcal{N}(i)}$, where $\mathcal{N}(i)$ is the neighborhood of $i$. In most examples, this aggregation can be represented as a \emph{multiplication} of the node representation matrix $H^{(l-1)} \in \mathbb{R}^{N \times d_{l-1}}$ by a \emph{propagation matrix} $\matrixpropag \in \mathbb{R}^{N \times N}$ related to the graph structure, followed by a learnable linear transformation.
This aggregation can be expressed compactly as: starting from initial node features $H^{(0)} \in \mathbb{R}^{N \times d_0}$, a GNN with $K$ layers computes

\begin{equation}\label{eq:gnn}
   H^{(l)} = \sigma\left( \matrixpropag H^{(l-1)} \theta^{(l)} \right), \qquad l = 1,
   \ldots, K,
\end{equation}
where $\sigma$ is a pointwise activation function and $\theta^{(l)} \in \mathbb{R}^{d_{l-1} \times d_l}$ are learned parameters. Classical choices include $S = D^{-1}A$ (mean aggregation) or the normalized adjacency $S = (D+I)^{-1/2}(A+I)(D+I)^{-1/2}$, which corresponds to the classical GCNconv layer \cite{kipf2016GCN}. Note that we do not include skip connections here for later ease of exposition. For the rest of the paper, unless otherwise precised, we set  
$S = (D+I)^{-1/2}(A+I)(D+I)^{-1/2}$ as in~\cite{kipf2016GCN}.

Training a GNN~\eqref{eq:gnn} on a graph of $N$ nodes requires storing the intermediate representation of each of the $K$ layers, a memory cost that grows linearly with the number of nodes $N$, saturating standard GPU memory for large-scale graphs~\cite{hu2020open, hamilton2017inductive}. 
This memory bottleneck affects not only training but also inference of the network on such graphs.
Unlike in most machine learning settings, naive node batching does not decompose the computation in~\eqref{eq:gnn}: the representation $H^{(l)}_i$ of node $i$ at layer $l$ depends on the features of its entire $l$-hop neighborhood. 
For graphs with low average path length, this neighborhood grows rapidly with depth. As emphasized by~\cite{zeng2021decoupling}, in OGBN-products \cite{hu2020open} the 4-hop neighborhood of one node is in average  $\approx 0.6M$ nodes ($25\%$ of the full graph), leading to memory issues on GPU. This scalability bottleneck has motivated two main families of approaches: \emph{sampling-based} methods, which truncate the $l$-hop neighborhood to a fixed budget per layer, and \emph{compressed-graph} methods, which train the GNN on a structurally reduced version of the original graph. In this paper we focus on the latter.

\paragraph{Compressed-graph methods and their limitations.}
Compressed-graph methods trains the GNN on a structurally reduced version of the original graph.
Among them, \emph{graph coarsening}~\cite{loukas2019graph, kumar2023featured, kataria2024ugc,dickens2024graph, cohens2025pectral,joly2024graph, joly2025taxonomy} maps the original nodes to so-called \emph{supernodes}, forming a smaller graph that retains some properties of the original graph. 
 While, all these methods \emph{train} the GNN on the coarsened graph, they may differ in how they perform inference at test time. Some~\cite{kumar2023featured, kataria2024ugc, cohens2025pectral, dickens2024graph} apply the learned weights directly to the original graph at inference time. However: a) they generally lack guarantees that these weights transfer well to the original structure, and b) some graphs might even be too large for inference. Other works~\cite{joly2024graph, joly2025taxonomy} evaluate entirely within the coarsened space, ensuring consistency between training and inference, and this is the method that we adopt here. 
 However, a fundamental limitation of this approach is that all nodes mapped to the same supernode share the same prediction, which might be limiting especially in heterophilic situations.

In parallel, \emph{sparsification approaches} restrict $\matrixpropag$ to a sparser version computed once as preprocessing. \emph{Cluster-GCN}~\cite{chiang2019cluster} takes the converse approach to graph coarsening: rather than collapsing nodes across clusters, it restricts the GNN to intra-cluster connections only, replacing $\matrixpropag$ by a block-diagonal restriction. In addition to saving (some) memory, the main advantage of this approach is to easily allow for \emph{batching} during training, which is normally known to be rather problematic for GNNs: $k$-hop subgraphs are required around each batched nodes, which can easily be the entire graph. By forming disconnected subgraphs that can be independently loaded, Cluster-GCN solves this problem. However, the main limitation of such an approach is that inter-cluster edges are discarded, severing long-range dependencies. This might be limiting on long-range tasks.

\paragraph{Contributions.}
We propose \textbf{CoRe-GNN} (\textbf{Co}arsen and \textbf{Re}store), an architecture that performs in parallel an inter-cluster propagation on the coarsened graph and an intra-cluster propagation on the original node space, coupling the two across layers. The architecture is agnostic to the choice of the coarsening algorithm, which is treated as a tunable hyperparameter. This design addresses simultaneously the accuracy ceiling of graph coarsening and the long-range blindness of Cluster-GCN. First, we provide a unified matrix formulation of compressed-graph GNN training, showing that Cluster-GCN and graph coarsening correspond to complementary structured modifications of the propagation matrix $\matrixpropag$, clarifying their limitations and motivating CoRe-GNN. 
Second, we introduce a natural batching scheme that scales CoRe-GNN to graphs with millions of nodes. Third, we prove that CoRe-GNN inherits guarantees over the preservation of smooth signal propagation, with the same structure as those of graph coarsening but with different multiplicative constants that we empirically evaluate. Fourth, we evaluate CoRe-GNN on homophilic, heterophilic, large-scale, and long-range node classification benchmarks, consistently outperforming Cluster-GCN and graph coarsening baselines. An illustrative diagram of the method can be found in Fig.~\ref{fig:core_gnn_layer}.

\input{tikz_figure/coregnn_layer_flow.tikz}

\paragraph{Related work.}
Of course, there are many other classes of methods to handle large graphs. \emph{Sampling-based} methods address the $l$-hop explosion by approximating the neighborhood of a node at each layer. GraphSAGE~\cite{hamilton2017inductive} samples a fixed number of neighbors per layer for each node in a batch. The number of sampled neighbors grows exponentially with the number of layers making deep architectures expensive, and a large body of work has been devoted to mitigating this cost \cite{chen2018fastgcn, zou2019layer, gasteiger2022influence}.
An alternative is to sample entire subgraphs rather than individual neighbors \cite{zeng2019graphsaint, zeng2021decoupling}.
In general, coarsening-based and sampling-based methods differ structurally: while coarsening is performed once before training as a \emph{pre-processing}, sampling is generally performed at every step of training, thus requiring to retain access to the original graph.
\emph{Graph pooling} methods~\cite{ying2018hierarchical, bianchi2020spectral} are designed for graph-level classification and do not support node-level prediction. Most are differentiable and learned end-to-end, in contrast to the fixed pre-computed partitions used in coarsening-based approaches. Graph U-Net~\cite{gao2019graph} extends pooling to node classification but its differentiable coarsening is memory-intensive and not designed for scalability.
\emph{Graph condensation}~\cite{jin2022condensing} creates a small synthetic graph by aligning GNN gradients between the compressed and original graphs. While effective at small reduction ratios, it does not preserve the original topology and evaluation on the full graph at inference remains memory-intensive. 
\emph{Graph summarization} encompasses broader compression techniques: VQ-GNN~\cite{ding2021vq} compresses node representations via vector quantization, while methods based on overlapping cluster structures~\cite{finkelshtein2024learning, kouchly2025efficient} allow nodes to belong to multiple communities simultaneously. 
Several works use multi-level aggregation across different graph resolutions~\cite{zhong2023hierarchical, li2025partition}, but rely on complex aggregation schemes that are not batchable and do not target scalability to large graphs. In our work, the two levels of granularity serve a different purpose: resolving the accuracy ceiling of coarsening-only methods while preserving the scalability benefits of the cluster partition. 
Close to our work, FiT-GNN~\cite{roy2024fit} and~\cite{vonessen2024next} treat coarsened nodes as auxiliary nodes appended to the original graph, keeping all computations in the original node space. This avoids the dimension change between original and coarsened spaces but does not benefit from the computational advantages of training and inferring within the coarsened space.

\section{Compressed-Graph GNNs as Propagation Matrix Modifications}

Several compressed-graph GNN methods can be written as modifications of the standard update~\eqref{eq:gnn}, replacing $\matrixpropag$ by a structurally modified matrix to reduce memory and computation:
\begin{equation}\label{eq:general}
H^{(l)} = \sigma \left( S_{\mathrm{compressed}}\, H^{(l-1)}\theta^{(l)} \right)
\end{equation}
We detail below two such methods, Coarsen-GNN~\cite{joly2024graph, joly2025taxonomy} and Cluster-GCN~\cite{chiang2019cluster}, which operate on complementary modifications of the same partition-based decomposition of $\matrixpropag$. Note that both Coarsen-GNN and Cluster-GCN refer here to training and evaluating a GNN on, respectively, a coarsened or a clustered version of a graph, computed as a pre-processing with a dedicated algorithm (see Sec.~\ref{sec:expe}), \emph{not} to using a GNN to \emph{learn} the coarsening or the clustering.

\subsection{Coarsen-GNN}

Graph coarsening~\cite{loukas2019graph, joly2025taxonomy} reduces a graph $G$ with $N$ nodes to a coarsened graph $G_c$ with $n < N$ nodes. The proportion of reduction achieved is measured by the \textbf{coarsening ratio} $r = 1- \frac{n}{N}$. The mapping from $G$ to $G_c$ is obtained by grouping set of nodes in $G$ to form so-called \emph{supernodes} in $G_c$.  
Following~\cite{joly2025taxonomy}, we describe the coarsening through two matrices: a \textbf{reduction matrix} $P \in \mathbb{R}^{n \times N}$, which allows going from the original $N$-dimensional space to the coarsened $n$-dimensional space, and a \textbf{lifting matrix} $Q \in \mathbb{R}^{N \times n}$, which allows going back. Perhaps surprisingly, it is more convenient to define the coarsening through the lifting matrix $Q$, while $P$ will naturally follow \cite{joly2025taxonomy}. First, $Q$ must be \textbf{well-partitioned}: it must have exactly one non-zero entry per row, ensuring each original node $i$ belongs to exactly one supernode $\ell$, when $Q_{i\ell}>0$. Second, the non-zero entries of $Q$ are chosen such that, given some symmetric Laplacian $L$ on $G$, the coarsened Laplacian $L_c = Q^\top L Q$ is well-defined as a true Laplacian. Examples include the combinatorial Laplacian $D - A$ or the self-loop normalized Laplacian~\cite{joly2025taxonomy} $L = (D+I)^{-1/2}(D-A)(D+I)^{-1/2}$, which is related to the propagation matrix $S$ in GCN \cite{kipf2016GCN}. We generally adopt the latter in this paper. 
There are several possible choices for $P$ \cite{joly2025taxonomy}, but the most natural is to set $P = Q^+$, the Moore-Penrose pseudo-inverse of $Q$. Since $Q$ is well-partitioned, $Q^+ = (Q^\top Q)^{-1} Q^\top$ has only $N$ non-zero elements, resulting in linear-complexity lifting and reduction operations. In this paper we will always consider that $P=Q^+$.

Coarsen-GNN performs the entire propagation directly on the coarsened graph of size $n$. Considering node representations $H_c^{(l)} \in \mathbb{R}^{n \times d}$ on the coarsened graph, with initial $H^{(0)}_c = P H^{(0)}$, the propagation is naturally
\begin{equation}\label{eq:coarsenc}
\text{Coarsen-GNN} : \quad H_c^{(l)} = \sigma\!\left(S_c\, H_c^{(l-1)}\theta^{(l)}\right)
\end{equation}
where $S_c = PSQ \in \mathbb{R}^{n \times n}$ is the propagation matrix introduced in~\cite{joly2024graph}, followed by a final lifting by $Q$. Since $Q$ commutes with any pointwise activation $\sigma$ (Lemma~\ref{lem:commute}) and $PQ = I_n$, \eqref{eq:coarsenc} can equivalently be rewritten as a propagation on the original graph  with the matrix $\Pi S \Pi$, where $\Pi = QP \in \mathbb{R}^{N \times N}$ is the coarsen-lift operator: 
\begin{equation}\label{eq:coarsenorigin}
H^{(l)} = \sigma\!\left(\Pi S \Pi\, H^{(l-1)}\theta^{(l)}\right) = Q\sigma\!\left(\newmatrixpropag \,P H^{(l-1)}\theta^{(l)}\right)
\end{equation}
This shows that Coarsen-GNN fits the general framework~\eqref{eq:general} with $S_{\mathrm{compressed}} = \Pi S \Pi$. 
Of course, in practice, Coarsen-GNN propagates $H_c^{(l)}$ and only maps the final predictions back to the original nodes, but this view is a stepping stone towards our more general architecture. 
Moreover, it outlines the main limitation of Coarsen-GNN: since $\Pi$ is constant by block, all nodes within the same supernode receive the same prediction. As most coarsening algorithms merge neighboring nodes, this is particularly limiting on heterophilic graphs where adjacent nodes often carry different labels.

\subsection{Cluster-GCN}

Cluster-GCN~\cite{chiang2019cluster} also operates on a partitioning of the graph, here more classically referred to as clusters (instead of ``supernodes'').
Rather than projecting the node representations into a smaller space, it restricts the propagation to \emph{intra-cluster connections} only, that is, it removes all edges between clusters. Denoting $\sintra$ the restriction of $\matrixpropag$ to intra-cluster connections, Cluster-GCN replaces $\matrixpropag$ by $\sintra$ at every layer:  
\begin{equation}\label{eq:cluster}
\text{Cluster-GCN} : \quad H^{(l)} = \sigma\!\left(\sintra\, H^{(l-1)}\theta^{(l)}\right)
\end{equation}

In addition to saving memory by discarding some connections, the main advantage of Cluster-GCN is that, since $\sintra$ is block-diagonal with one block per cluster, \emph{batching} is straightforward: at each training step, we can select a subset $\mathcal{B}$ of clusters and loads only the corresponding blocks of $\sintra$ and node representations. 
However, Cluster-GCN severs all inter-cluster and long-range dependencies, which severely limits its performance on long-range tasks~\cite{liang2025towards}.

Figure~\ref{fig:prop_matrices_v2} illustrates the propagation matrices of the different methods.

\begin{remark}
Note that the original Cluster-GCN paper~\cite{chiang2019cluster} also considers \emph{merging} several randomly drawn clusters per batch during training, but similar to Coarsen-GNN here we consider that the partitioning is fixed, done as a preprocessing step. 
\end{remark}

\input{tikz_figure/prop_matrices_5cluster.tikz}

\section{A new architecture CoRe-GNN} 

 We have seen above that Coarsen-GNN assigns a single representation per supernode, limiting per-node discriminability, while Cluster-GCN discards all inter-cluster connections, severing long-range dependencies. In some sense, these are \emph{complementary} limitations, which we simultaneously address with the proposed \textbf{CoRe-GNN}.
It combines both strategies in parallel: it adds intra-cluster connections to Coarsen-GNN, or equivalently, extends Cluster-GCN with coarsened inter-cluster propagation. Denoting $\sextra = \matrixpropag - \sintra$ the inter-cluster part of $\matrixpropag$ and $\sextrac = P\sextra Q \in \mathbb{R}^{n \times n}$ its coarsened counterpart, CoRe-GNN performs both propagations in parallel at each layer: 
\begin{equation}\label{eq:core-gnn}
\text{CoRe-GNN} : \quad H^{(l)} = Q\,\sigma\!\left(\sextrac\, P H^{(l-1)}\theta_c^{(l)}\right) + \sigma\!\left(\sintra\, H^{(l-1)}\theta^{(l)}\right)
\end{equation}
The name CoRe-GNN reflects this two-step structure, where the inter-cluster term \textbf{co}arsens the signal to capture long-range structure, while the intra-cluster term \textbf{re}stores per-node discriminability lost by the coarsening. 
Unlike Coarsen-GNN which maps back to the original dimension only at the final layer, CoRe-GNN couples the two spaces at every layer via $Q$ and $P$, maintaining two parallel states $H_c^{(l)}$ and $H^{(l)}$, which directly motivates the batching scheme presented right after.
An illustrative diagram is provided in Figure~\ref{fig:core_gnn_layer}. The weight matrices $\theta^{(l)}$ and $\theta_c^{(l)}$ are independent: they respectively treat close-by intra-connections and long-range coarsened connections. Several design choices including weight sharing are studied in Appendix~\ref{app:ablation}.

\begin{remark}[CoRe-GNN strictly generalizes Cluster-GCN and approximates Coarsen-GNN]
\label{rem:generalize}
Setting $\theta_c^{(l)} = 0$ in CoRe-GNN recovers Cluster-GCN exactly. The converse direction is more subtle: setting $\theta^{(l)} = 0$ does not recover Coarsen-GNN, as $\sextrac = P\sextra Q$ differs from $\newmatrixpropag = PSQ$ by the term $P\sintra Q$. This term is block-diagonal as shown in Lemma~\ref{lem:block_diagonal}.
\end{remark}

\subsection{Natural Batching extension}

CoRe-GNN is designed to scale to large graphs where classical GCN runs out of GPU memory. Although the proposed architecture is already lighter than full GCN by replacing $\matrixpropag$ by the sparser combination $\sintra + Q\sextrac P$, this alone is not sufficient to handle graphs with millions of nodes. However, akin to Cluster-GCN, CoRe-GNN is amenable to a natural \emph{batching} strategy, by only selecting some clusters for intra-propagation. The batching scheme maintains two parallel states: $H_c^{(l)}$ for all super-nodes in the coarsened space, and $H_{\mathcal{B}}^{(l)}$ \emph{for the batched nodes only},  
in the original space. At each step, the inter-cluster term runs over all $n$ super-nodes cheaply since $n \ll N$, while only the clusters in the current batch $\mathcal{B}$ perform the full intra-cluster propagation. For non-batch super-nodes, the intra-cluster term is approximated by a self-loop proxy in the coarsened space, which is equivalent to lifting the super-node to the original space, applying a self-loop node by node, and reducing back.

\begin{algorithm}[h]
\small
\caption{CoRe-GNN: Batched Forward Pass}\label{alg:core-batched}
\begin{algorithmic}[1]
\Require $H^{(0)}_c =PH^{(0)} \in \mathbb{R}^{n \times d}$,\;
         $H^{(0)}_{\mathcal{B}} \in \mathbb{R}^{N_\mathcal{B} \times d}$,\;
         $P_\mathcal{B}, Q_\mathcal{B}$ \textit{(restricted to batch)},\;
         $\sextrac, \sintra[\mathcal{B}]$ \textit{(precomputed)},\;
         batch $\mathcal{B} \subseteq [n]$,\;
         weights $(\theta^{(l)}, \theta_c^{(l)})_{l=1}^K$
\For{$l = 1, \ldots, K-1$}
  \State $H_c^{(l)} \leftarrow \sigma\!\left(\sextrac\, H_c^{(l-1)}\theta_c^{(l)}\right)$
         \hfill\textit{// all $n$ super-nodes, coarsened space}
  \State $H_{\mathcal{B}}^{(l)} \leftarrow \sigma\!\left(\sintra[\mathcal{B}]\, H_{\mathcal{B}}^{(l-1)}\theta^{(l)}\right)$
         \hfill\textit{// intra-cluster, batch nodes only}
  \State $H_{\mathcal{B}}^{(l)} \leftarrow H_{\mathcal{B}}^{(l)} +  Q_\mathcal{B}\, H_c^{(l)}[\mathcal{B}]$
         \hfill\textit{// lift inter-cluster to batch nodes}
  \State $H_c^{(l)}[\mathcal{B}] \leftarrow P_\mathcal{B}\, H_{\mathcal{B}}^{(l)}$
         \hfill\textit{// reduce batch nodes back to super-nodes}
  \State $H_c^{(l)}[\bar{\mathcal{B}}] \leftarrow \sigma\!\left(H_c^{(l-1)}[\bar{\mathcal{B}}]\,\theta^{(l)}\right) +  H_c^{(l)}[\bar{\mathcal{B}}]$
         \hfill\textit{// self-loop proxy for non-batch super-nodes}
\EndFor
\State \textbf{return} $H_{\mathcal{B}}^{(K)}$, $H_c^{(K)}$
\end{algorithmic}
\end{algorithm}

Algorithm~\ref{alg:core-batched} summarizes the batched forward pass. A custom sampler groups clusters until a target node count is reached, ensuring constant-size batches despite variable cluster sizes. The intra-cluster blocks are pre-split at loading time and remain fixed during training, so no graph partitioning or subgraph sampling is performed during training. 

\paragraph{Exact inference.}
Without batches, standard full-graph \emph{inference} of CoRe-GNN may be out of memory on very large graphs such as Reddit and OGBN-Products~\cite{hu2020open}, even without retaining gradients.
However, we can derive a non-trivial exact inference procedure that avoids this bottleneck without any approximation, without resorting to the self-loop proxy used during training. For each layer, the inter-cluster term is computed once over all $n$ super-nodes on GPU via $\sextrac$, while the intra-cluster term is computed cluster by cluster, loading each block $\sintra[\mathcal{B}]$ from CPU to GPU on demand and discarding it immediately after. This strategy is not applicable during training, since backpropagation requires retaining the full computational graph across all layers, which causes out-of-memory crashes on large graphs. Detailed pseudo-codes for exact inference (Appendix~\ref{app:inference}), Coarsen-GNN training, CoRe-GNN full batch and batched training are provided in Appendix~\ref{app:pseudocode}.

\subsection{Complexity}
\label{sec:complexity}

CoRe-GNN replaces the full propagation matrix $\matrixpropag$ by the sparser combination $\sintra + Q\sextrac P$. Classical GCN requires $\mathcal{O}(KEd + KNd^2)$ operations per epoch, where $K$ is the number of layers, $E = \mathrm{nnz}(\matrixpropag)$ the number of edges, and $d$ the hidden dimension. CoRe-GNN substitutes $E$ by $E_{\mathrm{intra}} + E_c$, where $E_{\mathrm{intra}} = \mathrm{nnz}(\sintra) \leq E$ and $E_c = \mathrm{nnz}(\sextrac) \ll E$ since $\sextrac \in \mathbb{R}^{n \times n}$ with $n \ll N$. The lifting and reduction operations via $P$ and $Q$, which have only $N$ non-zero elements each, add a negligible $\mathcal{O}(N)$ cost.
Whether CoRe-GNN is cheaper than classical GCN depends on the graph structure and the coarsening ratio, as analyzed in Appendix~\ref{app:efficiency_regimes}. The batched variant provides a more systematic advantage: memory is reduced to $\mathcal{O}(K(n+b)d)$, independent of $N$, enabling training on graphs that would otherwise exceed GPU memory. A detailed complexity table of all methods is also provided in Appendix~\ref{app:complexity}.

\subsection{Propagation guarantees for CoRe-GNN}

In this section, we derive some theoretical guarantees that bound the error between message-passing on the original graph and message-passing in CoRe-GNN. For this, we adapt the results in \cite{joly2024graph} for Coarsen-GNN. Note that, to our knowledge, there is no theoretical analysis for Cluster-GCN that we could adapt for CoRe-GNN, 
and we leave such a complementary approach for future work.

In this section $L$ denotes a symmetric Laplacian of the original graph, the propagation matrix is $S = I_N - L$, and $Q$ is such that the coarsened Laplacian $L_c = Q^\top L Q$ is well-defined. Recall that $P = Q^+$ and $\Pi = QP$.
For a symmetric positive semi-definite matrix $L$, we define the Mahalanobis semi-norm $\|x\|_L = \sqrt{x^\top L x}$ and the associated pseudo-operator norm $\|M\|_L = \|L^{1/2} M L^{-1/2}\|_{op}$.

\paragraph{Reminder on Coarsening guarantee.}
To measure the quality of a coarsening, Loukas~\cite{loukas2019graph} introduced the \emph{Restricted Spectral Approximation} (RSA) constant, which measures the loss of information from a signal $x$ to its coarsened and re-lifted counterpart $\tilde{x} = \Pi x$. Since $\Pi$ is at most of rank $n < N$, only a subspace $\preservedspace$ of $\mathbb{R}^N$ can be preserved exactly.
\begin{definition}[Restricted Spectral Approximation constant]
Given a subspace $\preservedspace \subset \mathbb{R}^N$, a Laplacian $L$, a lifting matrix $Q$ and a reduction matrix $P$, the RSA constant is:
\begin{equation}\label{eq:rsa}
\epsilon_{L,Q,P,\preservedspace} = \sup_{x\in \preservedspace,\, \|x\|_L =1} \lVert x - QP x \rVert_{L}
\end{equation}
\end{definition}
Classically, $\preservedspace$ is spanned by the eigenvectors associated with the smallest eigenvalues of $L$, corresponding to the smoothest signals on the graph. Then, the authors in \cite{joly2024graph} show that, with our choice of propagation matrix $\newmatrixpropag = PSQ$, one step of message-passing in Coarsen-GNN \eqref{eq:coarsenc} is provably close to message-passing on the original graph,
in terms of the RSA constant $\epsilon_{L,Q,P,\preservedspace}$.

\begin{theorem}[Propagation on coarsened graph~\cite{joly2024graph}]
\label{thm:propag}
Under the assumption that $\Pi$ and $S$ are both $\ker(L)$-preserving and $S$ is $\preservedspace$-preserving, for all $x \in \preservedspace$:
\begin{equation}
\lVert Sx - Q\newmatrixpropag Px \rVert_{L} \leq \epsilon_{L,Q,P,\preservedspace} \lVert x \rVert_{L} \left( C_S + C_\Pi \right)
\end{equation}
where $C_S := \lVert S \rVert_{L}$, $C_\Pi := \lVert \Pi S \rVert_{L}$, and $M$ is said $E$-preserving if $Mx \in E$ for all $x \in E$.
\end{theorem}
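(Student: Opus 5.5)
The plan is to split the error into a propagation-after-projection term and a projection-only term. Since $Q\newmatrixpropag P = QPSQP = \Pi S \Pi$, we can write
\begin{equation*}
Sx - Q\newmatrixpropag P x = (S x - \Pi S x) + \Pi S (x - \Pi x).
\end{equation*}
The triangle inequality in the semi-norm $\|\cdot\|_L$ then leaves two quantities to bound separately: $\|(I_N-\Pi) Sx\|_L$ and $\|\Pi S(x-\Pi x)\|_L$.

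\textbf{First term.} Because $S$ is $\preservedspace$-preserving, $y := Sx \in \preservedspace$. The definition of the RSA constant~\eqref{eq:rsa}, extended by homogeneity, gives $\|y - \Pi y\|_L \le \epsilon_{L,Q,P,\preservedspace}\|y\|_L$. The homogeneity step needs care when $\|y\|_L = 0$, i.e.\ $y \in \ker(L)$. In that case the $\ker(L)$-preservation of $\Pi$ gives $\Pi y \in \ker(L)$, so $y - \Pi y \in \ker(L)$ and the term vanishes. It remains to bound $\|Sx\|_L \le \|S\|_L \|x\|_L = C_S\|x\|_L$. This should follow from $\|L^{1/2} S x\| = \|L^{1/2} S L^{-1/2} L^{1/2} x\|$, which holds provided $L^{-1/2}L^{1/2}$ acts as the identity on the relevant component of $x$. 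That is, we need $L^{1/2} S$ to vanish on $\ker(L)$, and this is exactly what $\ker(L)$-preservation of $S$ ensures, taking $L^{-1/2}$ as the pseudo-inverse. The first term is therefore at most $\epsilon\, C_S \|x\|_L$.

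\textbf{Second term.} Write $\|\Pi S(x-\Pi x)\|_L \le \|\Pi S\|_L\,\|x-\Pi x\|_L = C_\Pi \|x - \Pi x\|_L$, and then apply the RSA bound directly to $x \in \preservedspace$ to get $\|x-\Pi x\|_L \le \epsilon\|x\|_L$. The operator-norm inequality again requires that $\Pi S$ annihilates $\ker(L)$ in the $L$-seminorm, i.e.\ maps $\ker(L)$ into $\ker(L)$. This holds because both $S$ and $\Pi$ are $\ker(L)$-preserving. Summing the two bounds gives $\epsilon\|x\|_L(C_S + C_\Pi)$.

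\textbf{Main obstacle.} The only delicate point is justifying $\|My\|_L \le \|M\|_L\|y\|_L$ for a singular $L$ with a pseudo-inverse square root. The approach is to decompose $y = y_0 + y_\perp$ with $y_0 \in \ker(L)$ and $y_\perp \in \ker(L)^\perp$. Then $L^{1/2}My_0 = 0$, since $My_0 \in \ker(L)$ by $\ker(L)$-preservation, and $y_\perp = L^{-1/2}L^{1/2}y$. This gives $\|L^{1/2}My\| = \|L^{1/2}ML^{-1/2}(L^{1/2}y)\| \le \|M\|_L \|y\|_L$. Once this lemma is isolated, the rest of the argument is two applications of the triangle inequality and the RSA definition.
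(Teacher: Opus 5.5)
Your proposal is correct. It is essentially the standard argument from the source this theorem is imported from; the paper itself does not reprove the statement and only reuses it, with $\sextra$ in place of $S$, in the proof of Theorem~\ref{thm:propagcoregnn}. That argument writes $Q\newmatrixpropag P=\Pi S\Pi$, splits $Sx-\Pi S\Pi x=(I_N-\Pi)Sx+\Pi S(x-\Pi x)$, and bounds the two pieces by $\epsilon_{L,Q,P,\preservedspace}\,C_S\|x\|_L$ and $C_\Pi\,\epsilon_{L,Q,P,\preservedspace}\|x\|_L$, exactly as you do. Your use of the pseudo-inverse for $L^{-1/2}$ and your handling of the $\|y\|_L=0$ case correctly supply the details justifying $\|My\|_L\le\|M\|_L\|y\|_L$ for $\ker(L)$-preserving $M$.
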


The various assumptions of Theorem \ref{thm:propag} are discussed in the original paper~\cite{joly2024graph}. The RSA constant controls how well the coarsened graph preserves smooth signals, which directly justifies using coarsening algorithms that minimize the RSA~\cite{loukas2019graph} rather than other spectral criteria.

\paragraph{Guarantees for CoRe-GNN.}
Theorem~\ref{thm:propag} extends naturally to CoRe-GNN, 
under the same assumptions but with different constants. The full derivation is in Appendix~\ref{app:proof_coregnn}. 

\begin{theorem}[Propagation guarantee for CoRe-GNN]
\label{thm:propagcoregnn}
Under the assumption that $\Pi$ and $\sextra$ are both $\ker(L)$-preserving and $\sextra$ is $\preservedspace$-preserving, for all $x \in \preservedspace$:
\begin{equation}
\lVert Sx - \left( \sintra x + Q\sextrac Px \right) \rVert_{L} \leq \epsilon_{L,Q,P,\preservedspace} \lVert x \rVert_{L} \left( C_S' + C_\Pi' \right)
\end{equation}
where $C_S' := \lVert \sextra \rVert_{L}$, $C_\Pi' := \lVert \Pi\sextra \rVert_{L}$. 
\end{theorem}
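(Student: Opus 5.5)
The argument reduces the statement to a Coarsen-GNN-type estimate for $\sextra$ alone. Since $S = \sintra + \sextra$, the intra-cluster term cancels exactly:
\begin{equation*}
Sx - \left(\sintra x + Q\sextrac P x\right) = \sextra x - Q P \sextra Q P x = \sextra x - \Pi \sextra \Pi x .
\end{equation*}
So the claim becomes $\|\sextra x - \Pi\sextra\Pi x\|_L \le \epsilon_{L,Q,P,\preservedspace}\|x\|_L(C_S'+C_\Pi')$. This is the conclusion of Theorem~\ref{thm:propag} with $S$ replaced by $\sextra$, because $Q(P\sextra Q)P = \Pi\sextra\Pi$. The proof of Theorem~\ref{thm:propag} never uses the specific form $S = I_N - L$ beyond the listed hypotheses, and those hypotheses are exactly the ones we assume here for $\sextra$. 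So the result should follow by rerunning that argument with $\sextra$ in place of $S$. To be safe, I will redo it explicitly.

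We split with the triangle inequality:
\begin{equation*}
\|\sextra x - \Pi\sextra\Pi x\|_L \le \|\sextra x - \Pi\sextra x\|_L + \|\Pi\sextra(x - \Pi x)\|_L .
\end{equation*}
For the second term, use the definition $\|M\|_L = \|L^{1/2}ML^{-1/2}\|_{op}$ with $M = \Pi\sextra$. We want $\|\Pi\sextra y\|_L \le C_\Pi'\|y\|_L$ for $y = x - \Pi x$. This holds when $y$ is orthogonal to $\ker L$, or more generally when its $\ker L$ component is annihilated or preserved consistently. The $\ker(L)$-preservation of $\Pi$ and of $\sextra$ is what lets us discard the kernel component. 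Indeed, on $\ker L$ the seminorm vanishes, and the preservation assumptions ensure that kernel components stay in the kernel, so they contribute nothing. This gives $\|\Pi\sextra(x-\Pi x)\|_L \le C_\Pi'\|x-\Pi x\|_L \le C_\Pi'\,\epsilon\,\|x\|_L$ by the RSA definition, since $x\in\preservedspace$.

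For the first term, set $z = \sextra x$. Then $\|z - \Pi z\|_L \le \epsilon\|z\|_L$, because $\sextra$ is $\preservedspace$-preserving, so $z\in\preservedspace$ and the RSA definition applies. Next, $\|z\|_L \le C_S'\|x\|_L$, again using $\ker(L)$-preservation of $\sextra$ to handle the pseudo-inverse in $L^{-1/2}$. Together these give the $\epsilon C_S'\|x\|_L$ contribution, and summing the two terms yields the stated bound.

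The main obstacle is the rigorous handling of the semi-norm and pseudo-operator norm. Specifically, we must justify $\|My\|_L \le \|M\|_L\|y\|_L$ when $L$ is singular. This is where the $\ker(L)$-preserving hypotheses on $\Pi$ and $\sextra$ are needed. The approach is to decompose $y = y_0 + y_\perp$ with $y_0\in\ker L$, show that $My_0\in\ker L$ and hence contributes zero seminorm, and then write $L^{1/2}My_\perp = (L^{1/2}ML^{-1/2})L^{1/2}y_\perp$ using $L^{-1/2}L^{1/2} = $ the projector onto $(\ker L)^\perp$. I would check that this mirrors the corresponding step in \cite{joly2024graph}. Everything else is algebra.
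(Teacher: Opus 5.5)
Your proposal is correct and matches the paper's proof: the paper also cancels $\sintra$ via $S=\sintra+\sextra$, rewrites $Q\sextrac P=\Pi\sextra\Pi$, and then applies Theorem~\ref{thm:propag} with $\sextra$ in place of $S$. Your explicit rerun of that theorem simply unpacks the cited result, so nothing is missing: it uses the triangle split, applies the RSA bound to $\sextra x\in\preservedspace$ and to $x$, and uses the kernel decomposition to justify $\|My\|_L\le\|M\|_L\|y\|_L$ for $\ker(L)$-preserving $M$ (with $\ker(L)$-preservation of $\Pi$ also covering the degenerate case $\|z\|_L=0$ in the RSA step).
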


The constants $C_S'$ and $C_\Pi'$ now depend on $\sextra$ rather than $S$. Comparing the two results, CoRe-GNN has better preservation guarantees than Coarsen-GNN when $C_S' + C_\Pi' < C_S + C_\Pi$. While there is no guarantee that this is always the case, as these constants involves operator norms, note that if we consider Frobenius norms instead we would easily have $\lVert \sextra \rVert_{\mathrm{Fro}} \leq \lVert S \rVert_{\mathrm{Fro}}$, hinting that the constants are indeed improved for CoRe-GNN.
A notable difference is that $C_S' = \lVert \sextra \rVert_L$ depends on the coarsening, while $C_S = \lVert S \rVert_L$ is fixed by the graph. As the coarsening ratio increases, $\sextra$ becomes sparser and $C_S'$ is expected to decrease.  

Figure~\ref{fig:bound_comparison_sbm} shows the constants $C_S$, $C_\Pi$, $C'_S$, $C'_\Pi$, the full theoretical upper bound, as well as the empirical propagation error averaged over 100 signals sampled from the low-frequency eigenvectors of $L$, as a function of the coarsening ratio $r$, on an SBM synthetic graph. We observe that the theoretical bound and the propagation error are consistently lower for CoRe-GNN across all coarsening ratios. On real graphs (see Appendix~\ref{app:constants}), the picture is more nuanced: the constants for CoRe-GNN are not always smaller, but the empirical propagation error generally remains lower, showing that the upper bounds of Theorem \ref{thm:propag} and \ref{thm:propagcoregnn} tend to be rather loose on real data.

\begin{figure}[t]
  \centering
  \begin{subfigure}[b]{0.24\linewidth}
    \includegraphics[height=2.8cm]{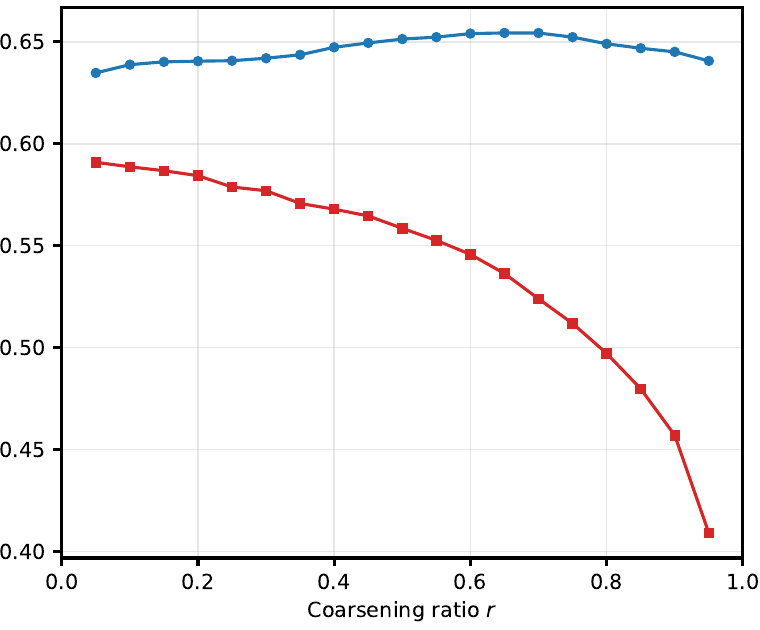}
    \caption*{$\mathcal{C}_\Pi$ terms}
  \end{subfigure}\hfill
  \begin{subfigure}[b]{0.24\linewidth}
    \includegraphics[height=2.8cm]{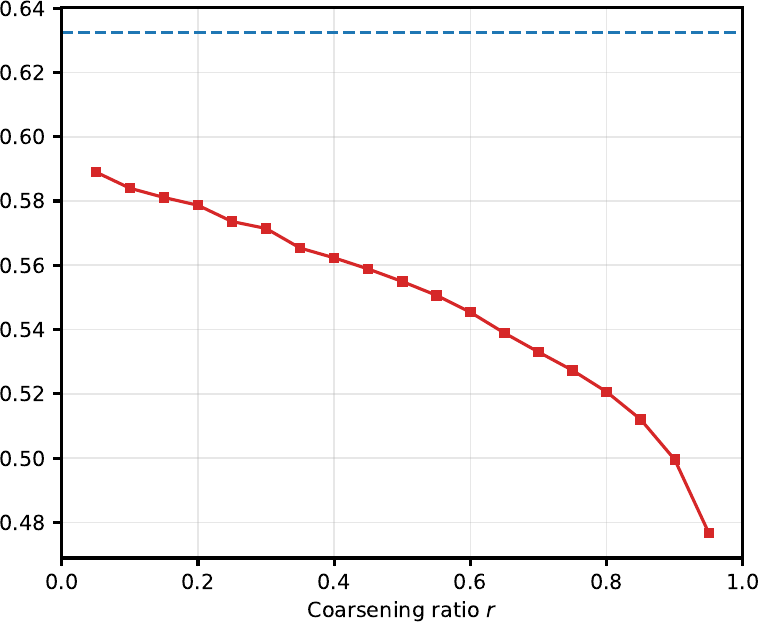}
    \caption*{$\mathcal{C}_S$ terms}
  \end{subfigure}\hfill
  \begin{subfigure}[b]{0.24\linewidth}
    \includegraphics[height=2.8cm]{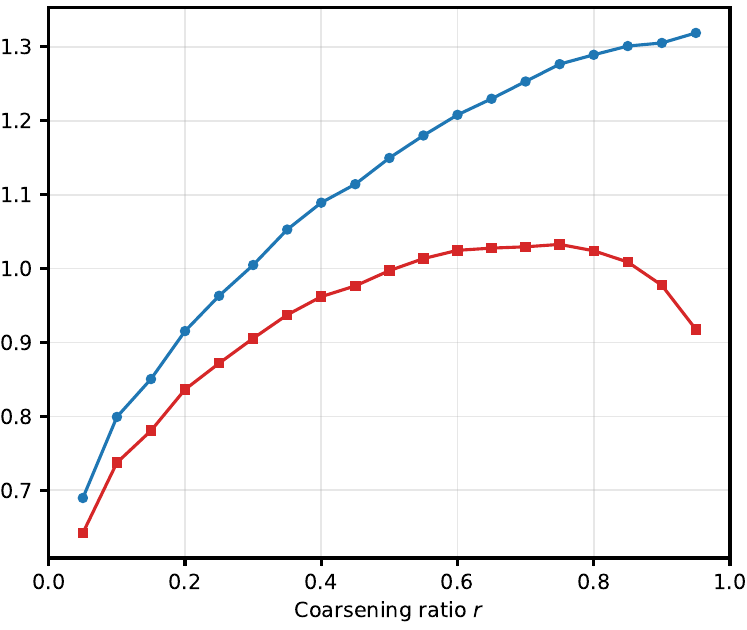}
    \caption*{Theoretical bound} 
  \end{subfigure}\hfill
  \begin{subfigure}[b]{0.24\linewidth}
    \includegraphics[height=2.8cm]{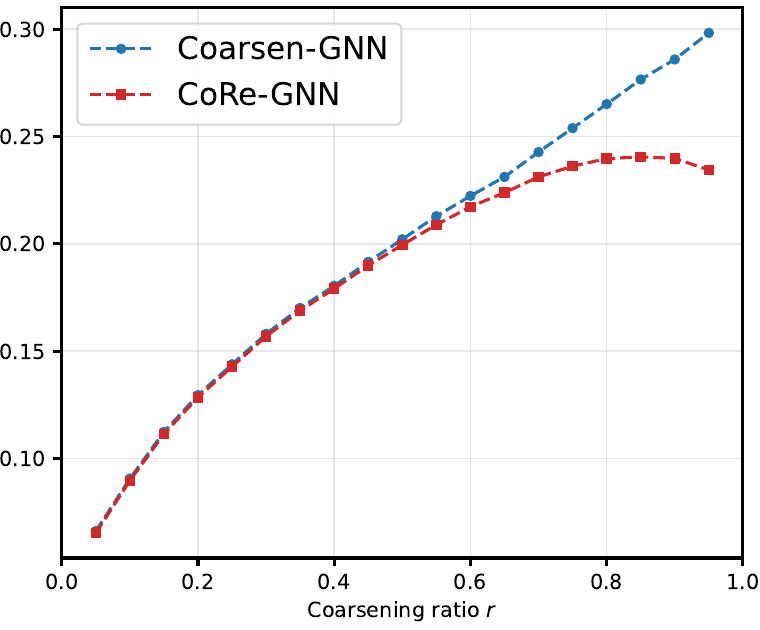}
    \caption*{mean propagation error}
  \end{subfigure}
  \caption{Comparison of the Coarsen-GNN bound (Theorem~\ref{thm:propag}, \textcolor{blue}{blue}) and the CoRe-GNN bound (Theorem~\ref{thm:propagcoregnn}, \textcolor{red}{red}) as a function of the coarsening ratio $r$.
  From left to right: $C_\Pi$, $C_S$ terms, full theoretical bound, and Empirical propagation error.
  SBM parameters 
  are detailed in Appendix~\ref{app:constants}.}
  \label{fig:bound_comparison_sbm}
 \end{figure}

\section{Experimental results}\label{sec:expe}

\paragraph{Setup.}
CoRe-GNN and its batched variant are designed for node classification on medium to large-scale graphs. All experiments are run on a single NVIDIA A40 GPU for reproducibility, and out-of-memory (OOM) results correspond to the memory limit of this hardware. All results are averaged over 10 runs. 

\textbf{(i) Coarsening algorithms:} we consider three possible algorithms to partition the graph into supernodes/clusters. 
\textbf{METIS}~\cite{karypis1997metis} minimizes the normalized cut between clusters and produces balanced partitions with a very efficient implementation. \textbf{Graclus}~\cite{dhillon2007weighted} minimizes the ratio cut via a multilevel approach and is applied iteratively to approach the target coarsening ratio. \textbf{Loukas}~\cite{loukas2019graph} is designed to minimize the RSA for the self-loop normalized Laplacian, as implemented in~\cite{joly2024graph, joly2025taxonomy}.  
We report results for the best-performing algorithm \emph{per dataset} and \emph{per model}. Pseudo-codes and implementation details are provided in Appendix~\ref{app:coarsening_algos}. 
The coarsening ratio $r$ is also a tunable hyperparameter, constrained by available GPU memory. In contrast to Coarsen-GNN, increasing $r$ does not systematically reduce the computational cost of CoRe-GNN. 
While larger values of $r$ reduce the size of the coarsened graph, they also lead to larger clusters, increasing the number of intra-cluster connections and potentially causing memory issues.
This reflects a trade-off in $r$, as further detailed in Appendix~\ref{app:efficiency_regimes}.
For all reported results, we select the best coarsening ratio $r$ per model and dataset.

\textbf{(ii) Models.} We evaluate CoRe-GNN and its batched variant CoRe-GNN(b). As CoRe-GNN approximates the full GCN propagation, we compare against full GCN~\cite{kipf2016GCN} when memory allows. We also compare against our implementation of Cluster-GCN, which performs only intra-cluster propagation and Coarsen-GNN~\cite{joly2024graph, joly2025taxonomy}, which trains and infers entirely on the coarsened graph.
Our goal is to evaluate the effect of the propagation structure rather than to maximize benchmark scores, and we therefore do not include architecture-specific tricks such as residual connections. 

\textbf{(iii) Hyperparameters.} All models are trained with the Adam optimizer for up to 800 epochs with a ReduceLROnPlateau scheduler and a dropout of $0.5$. Each model is evaluated over a hyperparameter grid covering hidden dimension, number of layers, learning rate, weight decay, and coarsening ratio where applicable. The grid varies per model and dataset group. Full hyperparameter grids and per-dataset selected values are provided in Appendix~\ref{app:hyperparameters}.

\begin{wrapfigure}{r}{0.37\textwidth}
\centering
\vspace{-2.7em}
\includegraphics[width=\linewidth]{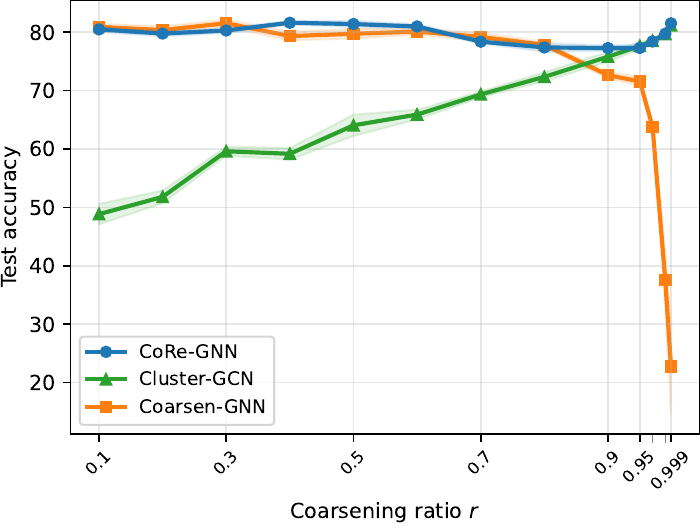}
\caption{Accuracy as a function of the coarsening ratio $r$ on Cora, with fixed hyperparameters ($K=2$, $d=256$, lr=$0.01$, wd=$0.0005$, Loukas coarsening algorithm).}
\label{fig:ablation_r_cora}
\vspace{-1em}
\end{wrapfigure}

\paragraph{Influence of coarsening ratio.} While in the main tables we select the best coarsening ratio for each model, Figure~\ref{fig:ablation_r_cora} reports accuracy as a function of $r$ on Cora with fixed hyperparameters. At low ratios, clusters are small as there are many super-nodes, and Coarsen-GNN preserves per-node discriminability well, peaking at $r=0.3$ (max $81.53\%$). At higher coarsening ratios, accuracy drops as more nodes share the same prediction,
although these regimes are precisely those where coarsening is computationally most beneficial. 
In contrast, Cluster-GCN performs poorly at low ratios, where clusters are too small for effective intra-cluster propagation, and improves as $r$ increases. However, at very high ratios, each cluster approaches the full graph, which removes the benefit of batching and can lead to memory issues. Peak performance is reached at $r=0.999$ ($81.17\%$).
CoRe-GNN's strong performance across all ratios, with a peak accuracy of $81.62\%$ at $r=0.4$, suggests that combining both propagation mechanisms mitigates this trade-off.

\paragraph{Homophilic datasets.} We evaluate on the three Planetoid citation networks~\cite{yang2016revisiting}: Cora, Citeseer and Pubmed. We restrict to the main connected component of Cora and Citeseer, which may explain small differences with other benchmarks. On Cora and Citeseer, Coarsen-GNN achieves the best mean accuracy, while CoRe-GNN is a close second. Both outperform full GCN, as coarsening preserves low-frequency graph structure that aligns well with homophilic label distributions. On Pubmed, CoRe-GNN ranks first. These datasets are not particularly challenging for coarsening-based methods, as clusters tend to be label-homogeneous.

\input{tables/planetoid_heterophilic_new}

\paragraph{Heterophilic datasets.} We consider Chameleon, Squirrel~\cite{rozemberczki2021multi} and Amazon Ratings~\cite{platonovcritical}. On these datasets, Coarsen-GNN is limited by its constant-by-block predictions: as most coarsening algorithms merge neighboring nodes, nodes within the same supernode often carry different labels, directly capping accuracy. CoRe-GNN significantly outperforms Coarsen-GNN on all three datasets. As shown in recent benchmarks~\cite{luo2024classic}, heterophilic graphs require long-range information to be classified accurately, which explains why CoRe-GNN also outperforms Cluster-GCN. On Squirrel, full GCN outperforms CoRe-GNN, while CoRe-GNN(b) achieves the best results, suggesting that the batching scheme provides additional regularization on this dataset.

\paragraph{Large-scale graphs.} We report results on ogbn-arxiv~\cite{hu2020open}, Reddit~\cite{hamilton2017inductive} and ogbn-products~\cite{hu2020open}. On ogbn-arxiv, Coarsen-GNN performs poorly as the graph has weak community structure, while CoRe-GNN and Cluster-GCN are both competitive but do not surpass full GCN. On Reddit and ogbn-products, full GCN and CoRe-GNN without batching are out of memory. CoRe-GNN(b) significantly outperforms both Cluster-GCN and Coarsen-GNN on these datasets, suggesting that the inter-cluster and intra-cluster propagations capture complementary information. 

\input{tables/large_city_new}
\paragraph{Long-range tasks.} We evaluate on the Paris and London road networks~\cite{liang2025towards}, two recently introduced large graphs specifically designed to measure long-range interactions in graph machine learning. A description of all datasets is provided in Appendix~\ref{app:datasets}. These graphs require aggregating information over many hops to correctly classify nodes, making them particularly challenging for local message passing methods.
Among the baselines, Coarsen-GNN outperforms both Cluster-GCN and full GCN as its
inter-cluster propagation already captures some long-range structure. CoRe-GNN and
CoRe-GNN(b) achieve the best accuracy among the architectures considered here,
indicating that combining inter- and intra-cluster propagations provides the most
effective long-range aggregation in this setting.

\section{Conclusion}

CoRe-GNN combines inter-cluster and intra-cluster propagations in parallel at each layer, overcoming the limitations of both graph coarsening and Cluster-GCN. The architecture is agnostic to the choice of coarsening algorithm, admits a natural batching scheme that scales to graphs with millions of nodes, and inherits provable propagation guarantees from graph coarsening. On node classification benchmarks spanning homophilic, heterophilic, large-scale, and long-range graphs, CoRe-GNN consistently outperforms both Coarsen-GNN and Cluster-GCN baselines, and achieves substantial gains over full GCN on long-range tasks on the cities datasets.

\paragraph{Limitations and outlook.} The unified matrix formulation underlying CoRe-GNN applies to GNNs whose forward pass can be expressed through a fixed propagation matrix $\matrixpropag$, excluding architectures with residual connections or learned propagation matrices such as Graph Attention Networks~\cite{velivckovic2017graph}. Adapting CoRe-GNN to these settings is a natural direction for future work, alongside using separate optimizers for the two branches and designing residual connections compatible with the two-branch structure. On the theoretical side, a multi-level extension where the coarsened graph is itself coarsened recursively would naturally generalize the architecture.

Beyond these extensions, CoRe-GNN (batched) also suggests a natural connection to asynchronous federated learning.  
The coarsened graph plays the role of a central aggregator that receives local updates from the clusters selected in the current batch. For clusters that have not yet reported back, the central aggregator sends a self-loop proxy in place of their local update. This asynchronous communication pattern suggests that CoRe-GNN could serve as a framework for federated GNN training on large graphs, where not all agents are available at each round.

\section*{Acknowledgments}
The authors acknowledge the fundings of France 2030, PEPR IA, ANR-23-PEIA-0008 and European
Union ERC-2024-STG-101163069 MALAGA.

\begingroup
\footnotesize

\bibliography{refs}  
\endgroup


\appendix

\section{Proofs}

\subsection{Proof of Theorem~\ref{thm:propagcoregnn}}
\label{app:proof_coregnn}

\begin{align}
\lVert Sx - (\sintra x + Q\sextrac Px) \rVert_L 
&= \lVert (\sintra + \sextra) x - \sintra x - Q P \sextra Q P x \rVert_L \\
&= \lVert \sextra x - \Pi \sextra \Pi x \rVert_L \\
&\leq \epsilon_{L,Q,P,\preservedspace} \lVert x \rVert_L \left( C_S' + C_\Pi' \right)
\end{align}
where the second equality uses $Q\sextrac P = \Pi\sextra\Pi$, and the last inequality applies Theorem~\ref{thm:propag} to $\sextra$ in place of $S$. Note that the assumptions and constants are now stated for $\sextra$: $\Pi$ and $\sextra$ must be $\ker(L)$-preserving, $\sextra$ must be $\preservedspace$-preserving, and the constants $C_S' = \lVert \sextra \rVert_L$ and $C_\Pi' = \lVert \Pi\sextra \rVert_L$ depend on $\sextra$ rather than $S$.

\subsection{Proof commutativity of the lifting processes}
\begin{lemma}[Commutativity of $Q$ with $\sigma$]
\label{lem:commute}
Let $Q \in \mathbb{R}^{N \times n}$ be a well-partitioned matrix with non-negative entries, and let $\sigma$ be a positively homogeneous pointwise activation function, i.e., $\sigma(\alpha x) = \alpha\,\sigma(x)$ for all $\alpha > 0$. Then for any $X \in \mathbb{R}^{n \times d}$:
\[
Q\,\sigma(X) = \sigma(QX)
\]
\begin{proof}
Since $Q$ is well-partitioned, for every row $i$ there exists a unique $k^* \in [n]$ such that $Q_{ik^*} > 0$ and $Q_{ij} = 0$ for all $j \neq k^*$. Therefore:
\[
[Q\,\sigma(X)]_i = \sum_{j=1}^n Q_{ij}\,\sigma(X_j) = Q_{ik^*}\,\sigma(X_{k^*})
\]
and
\[
[\sigma(QX)]_i = \sigma\!\left(\sum_{j=1}^n Q_{ij} X_j\right) = \sigma(Q_{ik^*} X_{k^*}) = Q_{ik^*}\,\sigma(X_{k^*})
\]

\end{proof}
\end{lemma}

\begin{remark}
The homogeneity condition $\sigma(\alpha x) = \alpha\,\sigma(x)$ for $\alpha > 0$ is satisfied by the identity function (as in SGC~\cite{wu2019SGC}) and by ReLU (as in classical GCN~\cite{kipf2016GCN}), but not by sigmoid or tanh activations.
\end{remark}

\subsection{Relation between CoRe-GNN and Coarsen-GNN}
\label{app:generalize}

We denote by $C_k = \{i \in [N] : [Q]_{ik} > 0\}$ the set of original nodes mapped to supernode $k$, for $k = 1, \ldots, n$. We first establish the structure of $P\sintra Q$ and then show that $\Pi \sintra \Pi$ is block-diagonal by cluster.

\begin{lemma}[$P\sintra Q$ is diagonal and $\Pi\sintra\Pi$ is block-diagonal]
\label{lem:block_diagonal}
Let $Q$ be a well-partitioned lifting matrix and $P=Q^+$ the moore penrose inverse of this matrix  with the same support as $Q^\top$. Then $P\sintra Q \in \mathbb{R}^{n\times n}$ is diagonal with entries:
\[
[P\sintra Q]_{kk} = \sum_{u,v \in C_k} [P]_{ku} [\sintra]_{uv} [Q]_{vk}
\]
and $\Pi\sintra\Pi = Q(P\sintra Q)P$ is block-diagonal in $\mathbb{R}^{N\times N}$, with one block per cluster.
\begin{proof}
For $k \neq l$ with $k, l \in [n]$:
\begin{align}
[P\sintra Q]_{kl} 
&= \sum_{u,v=1}^N [P]_{ku} [\sintra]_{uv} [Q]_{vl} \\
&= \sum_{u \in C_k} \sum_{v \in C_l} [P]_{ku} [\sintra]_{uv} [Q]_{vl} \\
&= 0
\end{align}
since $[\sintra]_{uv} = 0$ for $u \in C_k$, $v \in C_l$ with $k \neq l$. 

For $i \in C_k$ and $j \in C_l$ with $k \neq l$:
\begin{align}
[\Pi\sintra\Pi]_{ij} 
&= \sum_{k'=1}^n [Q]_{ik'} [P\sintra Q]_{k'k'} [P]_{k'j} \\
&= [Q]_{ik} [P\sintra Q]_{kk} [P]_{kj} = 0
\end{align}
since $[P]_{kj} = 0$ for $j \notin C_k$ as $P=Q^+$ has the same support as $Q^\top$.

 and, for $i, j \in C_k$, $[\Pi\sintra\Pi]_{ij}$ is in general nonzero.
\end{proof}
\end{lemma}

Since $\sextrac = P\sextra Q = PSQ - P\sintra Q$, the term $P\sintra Q$ is precisely what distinguishes $\newmatrixpropag$ from $\sextrac$: it acts as a diagonal self-loop correction on the coarsened nodes, absent from CoRe-GNN. By contrast, $\Pi\sextra\Pi$ has nonzero entries across different clusters, capturing inter-cluster dependencies by blocks. 

\section{Analysis of Propagation Bound Constants}
\label{app:constants}

This section provides a deeper analysis of the constants $C_S$, $C_\Pi$, $C_S'$ and $C_\Pi'$ appearing in Theorems~\ref{thm:propag} and~\ref{thm:propagcoregnn}. For each graph, we plot the four quantities: $C_S$ and $C_\Pi$ terms, full theoretical upper bound, and mean propagation error averaged over 100 smooth signals sampled from the low-frequency subspace of $L$, all as a function of the coarsening ratio $r$, comparing Coarsen-GNN (blue) and CoRe-GNN (red). We first analyze four SBM graphs with varying community structure (Section~\ref{app:sbm_analysis}), then four real datasets (Section~\ref{app:real_analysis}), before discussing the results(Section~\ref{app:constants_discussion}).

\subsection{Empirical analysis on SBM graphs}
\label{app:sbm_analysis}

We analyze the bounds on four synthetic Stochastic Block Model (SBM) graphs, each with 5 communities of 100 nodes. In an SBM, edges are drawn independently with probability $p_{\rm in}$ within communities and $p_{\rm out}$ across communities. We vary $p_{\rm in}$ and $p_{\rm out}$ to span a range of spectral connectivities. SBM-1 and SBM-2 are sparse graphs with low $p_{\rm in}$, while SBM-3 and SBM-4 are denser with higher inter-community connectivity. All graphs are coarsened with the Loukas algorithm. For each graph we show its structure and the four bound quantities as a function of the coarsening ratio $r$. SBM-2 corresponds to the graph shown in the main paper (Figure~\ref{fig:bound_comparison_sbm}).

\begin{figure}[H]
\centering
\begin{subfigure}[b]{0.18\linewidth}
    \includegraphics[width=\linewidth]{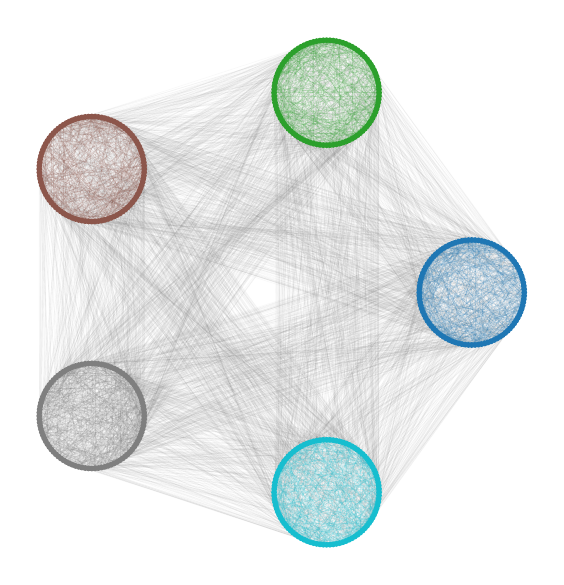}
    \caption*{Graph structure}
\end{subfigure}\hfill
\begin{subfigure}[b]{0.18\linewidth}
    \includegraphics[height=2.1cm]{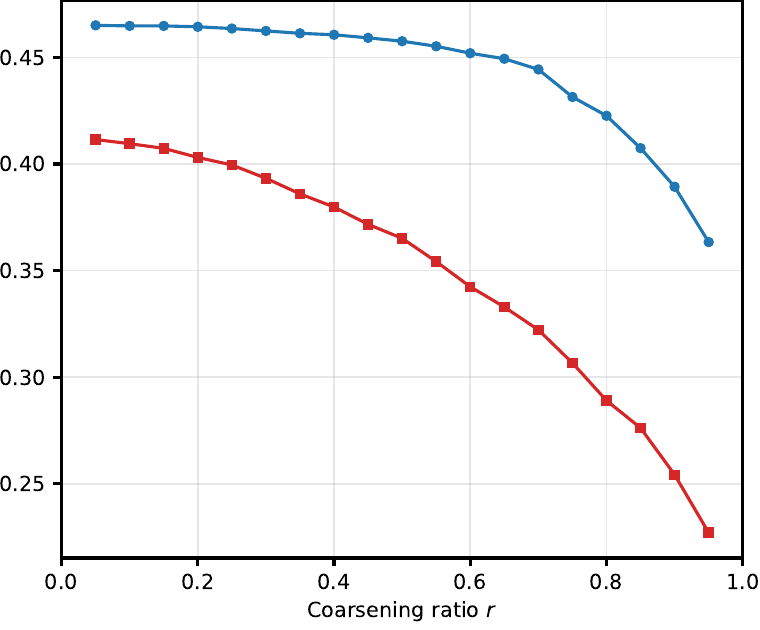}
    \caption*{$C_\Pi$ terms}
\end{subfigure}\hfill
\begin{subfigure}[b]{0.18\linewidth}
    \includegraphics[height=2.1cm]{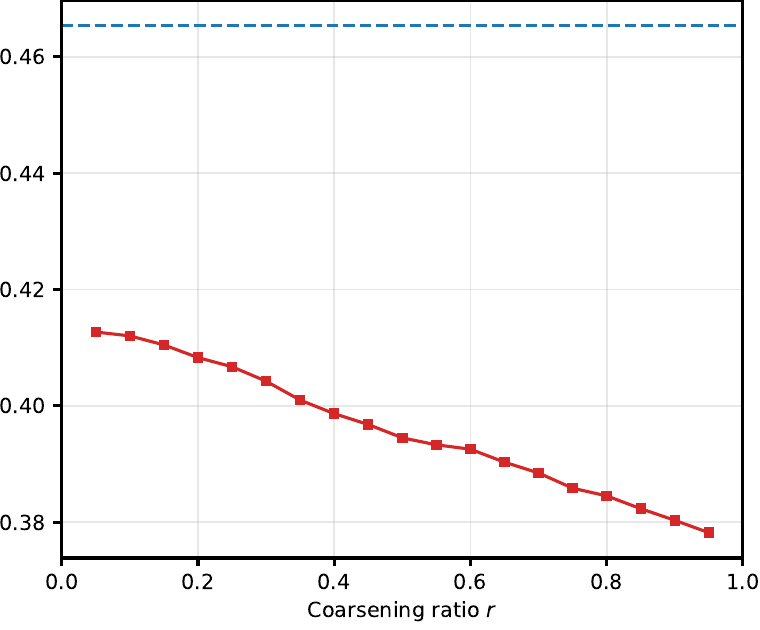}
    \caption*{$C_S$ terms}
\end{subfigure}\hfill
\begin{subfigure}[b]{0.18\linewidth}
    \includegraphics[height=2.1cm]{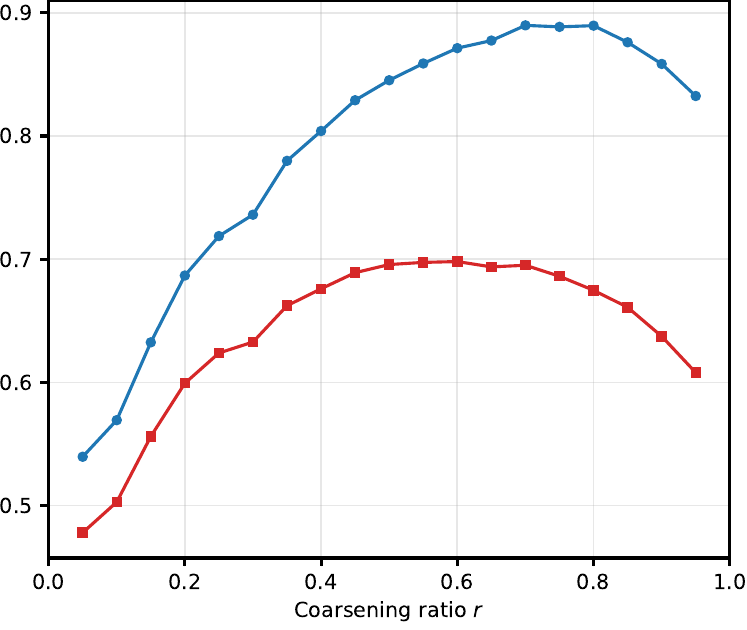}
    \caption*{Theoretical bound}
\end{subfigure}\hfill
\begin{subfigure}[b]{0.18\linewidth}
    \includegraphics[height=2.1cm]{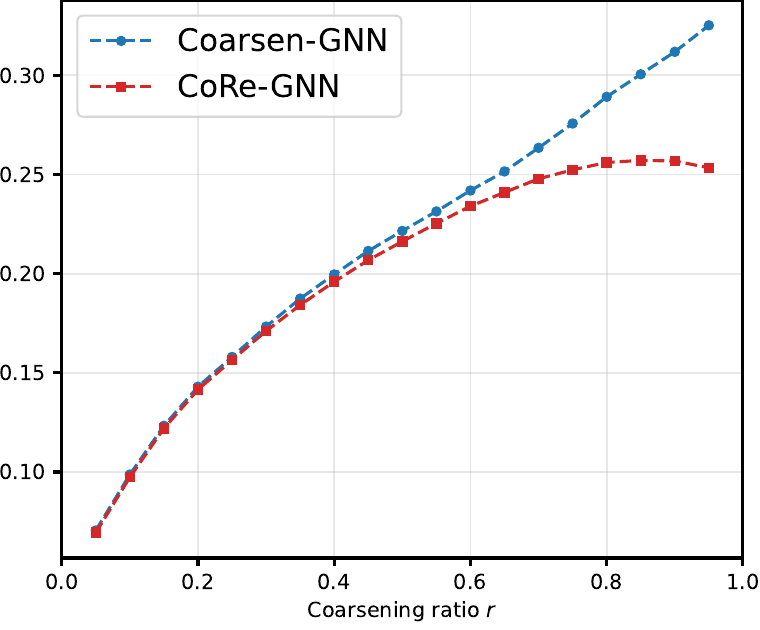}
    \caption*{propagation error}
\end{subfigure}
\caption{SBM-1 ($p_{\rm in} = 0.08$, $p_{\rm out} = 0.030$)}
\end{figure}

\begin{figure}[H]
\centering
\begin{subfigure}[b]{0.18\linewidth}
    \includegraphics[width=\linewidth]{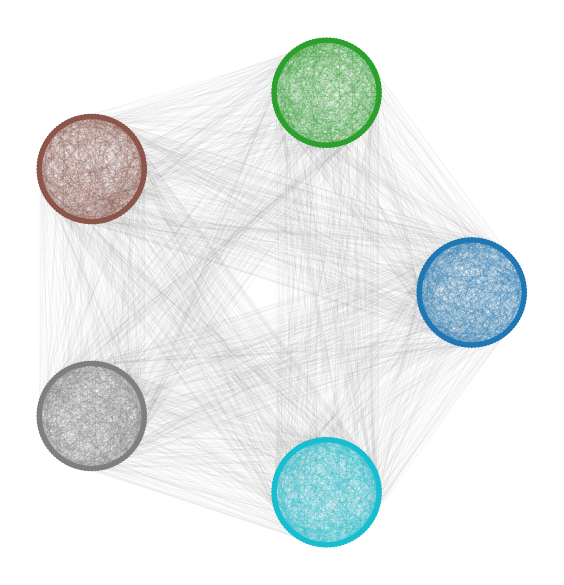}
    \caption*{Graph structure}
\end{subfigure}\hfill
\begin{subfigure}[b]{0.18\linewidth}
    \includegraphics[height=2.1cm]{pdf_figures/upper-bound/sbm_2/SBM_2_C_Pi_terms.pdf}
    \caption*{$C_\Pi$ terms}
\end{subfigure}\hfill
\begin{subfigure}[b]{0.18\linewidth}
    \includegraphics[height=2.1cm]{pdf_figures/upper-bound/sbm_2/SBM_2_C_S_terms.pdf}
    \caption*{$C_S$ terms}
\end{subfigure}\hfill
\begin{subfigure}[b]{0.18\linewidth}
    \includegraphics[height=2.1cm]{pdf_figures/upper-bound/sbm_2/SBM_2_RHS_bound.pdf}
    \caption*{Theoretical bound}
\end{subfigure}\hfill
\begin{subfigure}[b]{0.18\linewidth}
    \includegraphics[height=2.1cm]{pdf_figures/upper-bound/sbm_2/SBM_2_LHS.pdf}
    \caption*{propagation error}
\end{subfigure}
\caption{SBM-2 ($p_{\rm in} = 0.15$, $p_{\rm out} = 0.020$)}
\end{figure}

\begin{figure}[H]
\centering
\begin{subfigure}[b]{0.18\linewidth}
    \includegraphics[width=\linewidth]{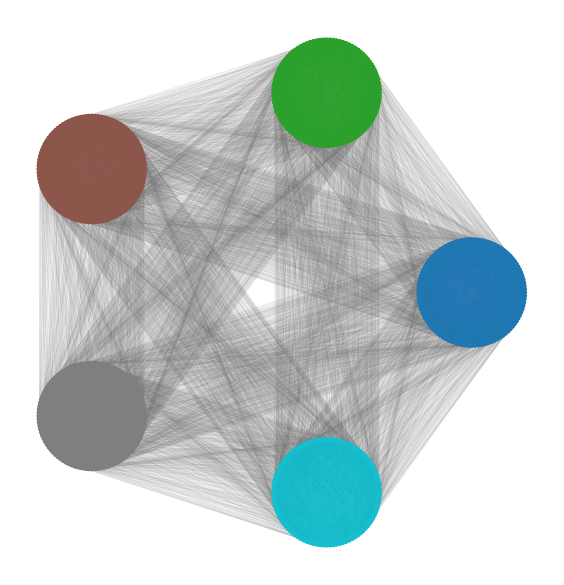}
    \caption*{Graph structure}
\end{subfigure}\hfill
\begin{subfigure}[b]{0.18\linewidth}
    \includegraphics[height=2.1cm]{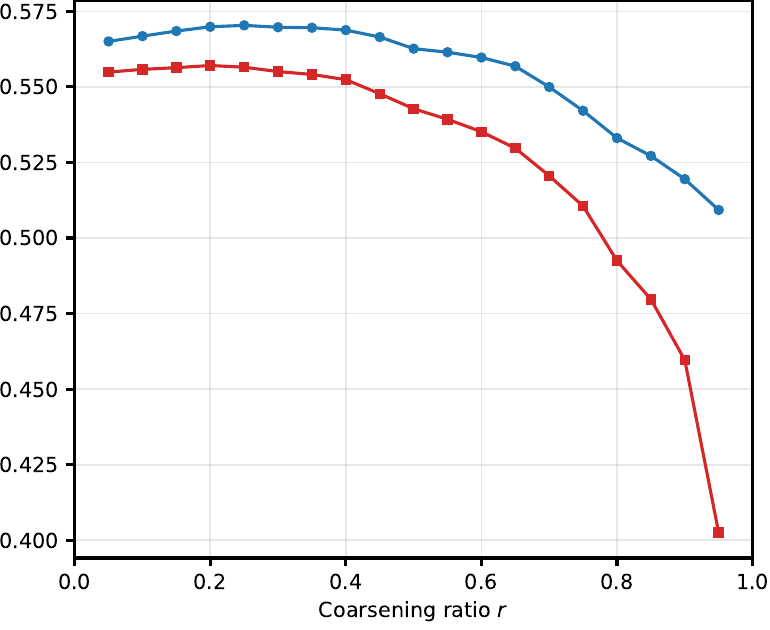}
    \caption*{$C_\Pi$ terms}
\end{subfigure}\hfill
\begin{subfigure}[b]{0.18\linewidth}
    \includegraphics[height=2.1cm]{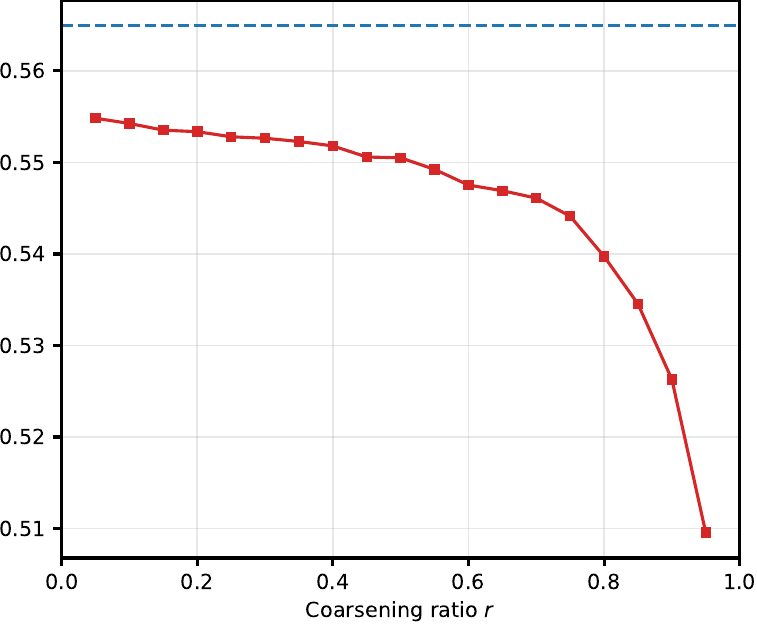}
    \caption*{$C_S$ terms}
\end{subfigure}\hfill
\begin{subfigure}[b]{0.18\linewidth}
    \includegraphics[height=2.1cm]{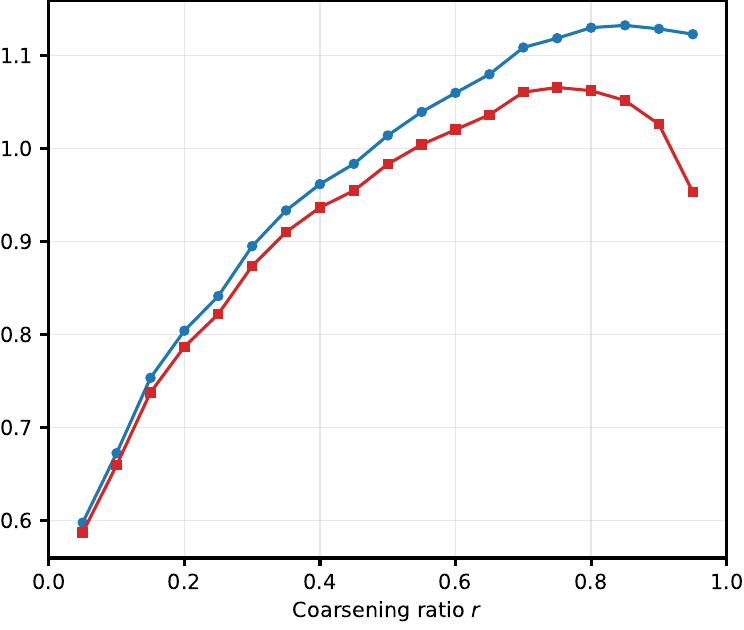}
    \caption*{Theoretical bound}
\end{subfigure}\hfill
\begin{subfigure}[b]{0.18\linewidth}
    \includegraphics[height=2.1cm]{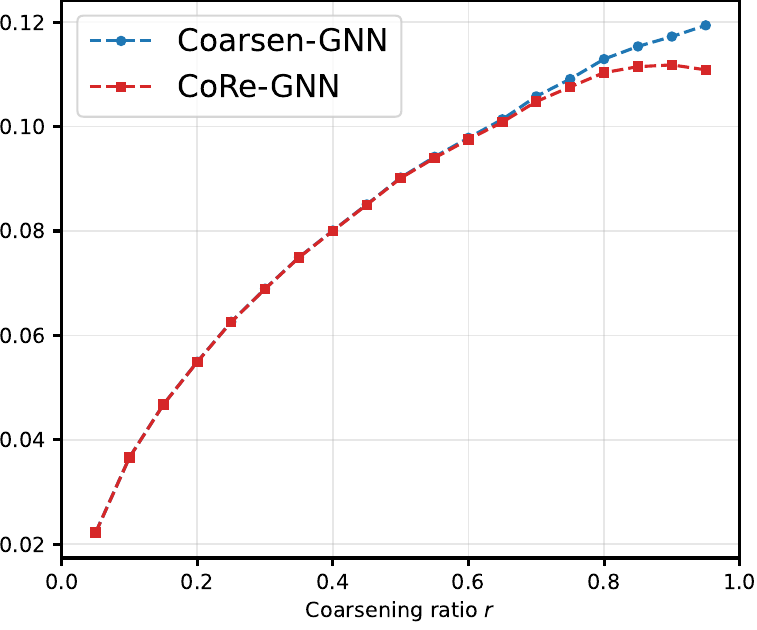}
    \caption*{propagation error}
\end{subfigure}
\caption{SBM-3 ($p_{\rm in} = 0.70$, $p_{\rm out} = 0.10$)}
\end{figure}

\begin{figure}[H]
\centering
\begin{subfigure}[b]{0.18\linewidth}
    \includegraphics[width=\linewidth]{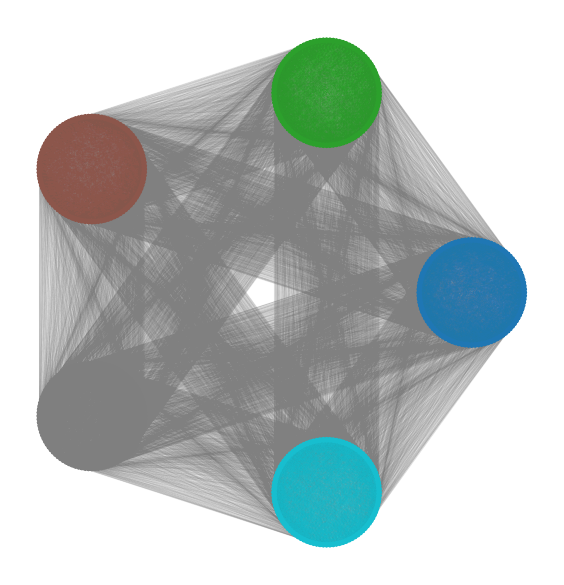}
    \caption*{Graph structure}
\end{subfigure}\hfill
\begin{subfigure}[b]{0.18\linewidth}
    \includegraphics[height=2.1cm]{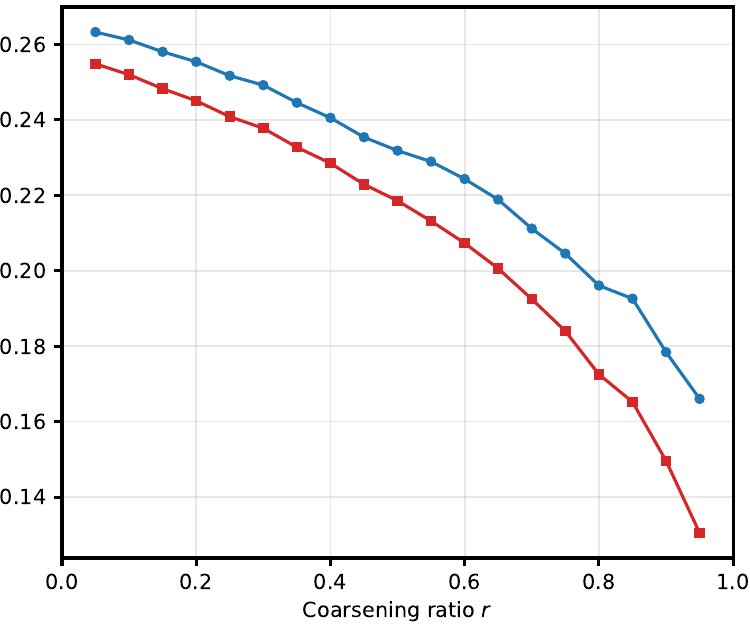}
    \caption*{$C_\Pi$ terms}
\end{subfigure}\hfill
\begin{subfigure}[b]{0.18\linewidth}
    \includegraphics[height=2.1cm]{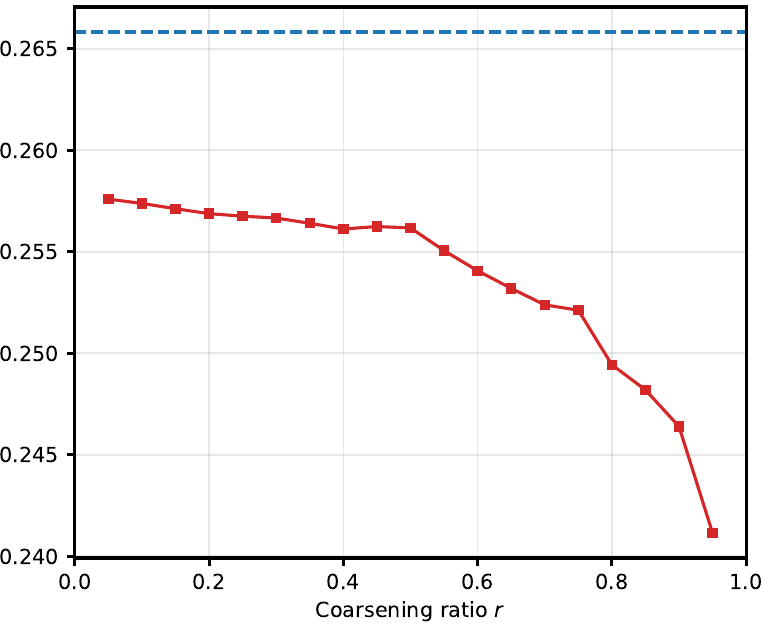}
    \caption*{$C_S$ terms}
\end{subfigure}\hfill
\begin{subfigure}[b]{0.18\linewidth}
    \includegraphics[height=2.1cm]{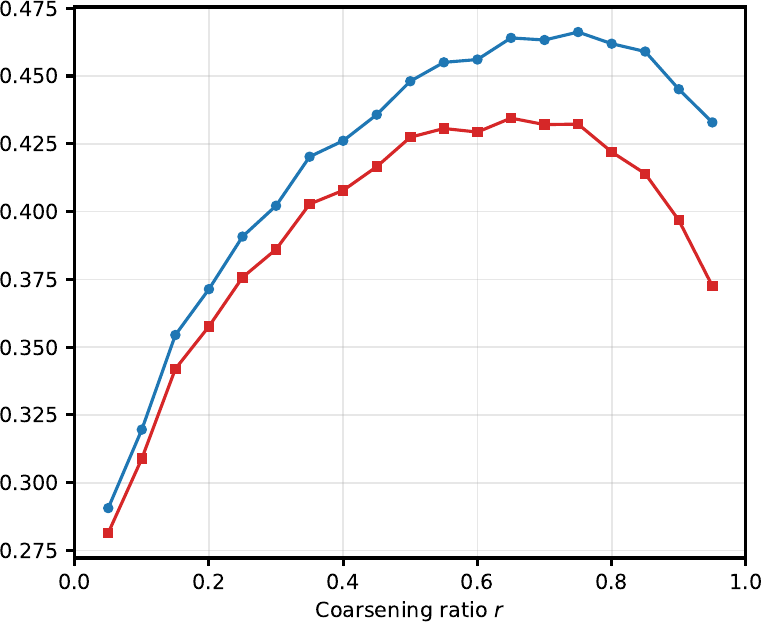}
    \caption*{Theoretical bound}
\end{subfigure}\hfill
\begin{subfigure}[b]{0.18\linewidth}
    \includegraphics[height=2.1cm]{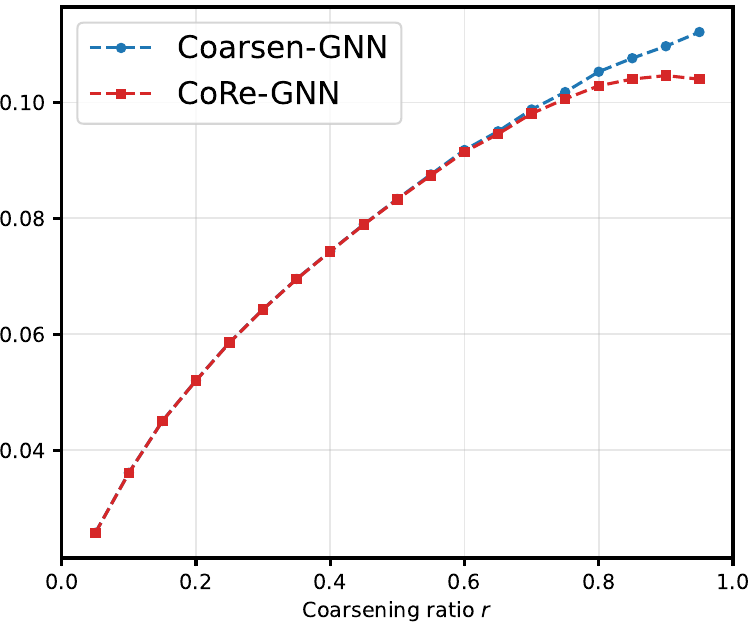}
    \caption*{propagation error}
\end{subfigure}
\caption{SBM-4 ($p_{\rm in} = 0.50$, $p_{\rm out} = 0.20$)}
\end{figure}

\subsection{Empirical analysis on real graphs}
\label{app:real_analysis}
We repeat the same analysis on four real datasets: Cora, Citeseer, Chameleon and Squirrel.

\begin{figure}[H]
\centering
\begin{subfigure}[b]{0.24\linewidth}
    \includegraphics[height=2.8cm]{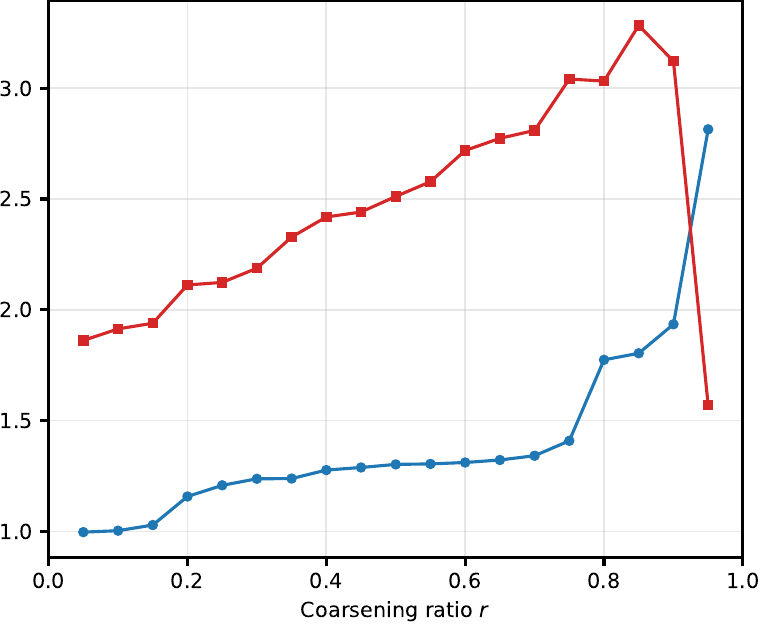}
    \caption*{$C_\Pi$ terms}
\end{subfigure}\hfill
\begin{subfigure}[b]{0.24\linewidth}
    \includegraphics[height=2.8cm]{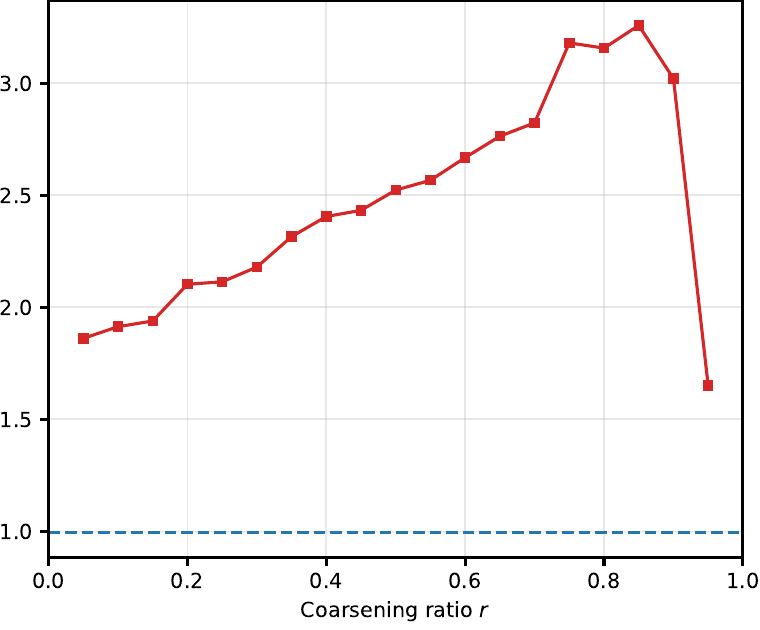}
    \caption*{$C_S$ terms}
\end{subfigure}\hfill
\begin{subfigure}[b]{0.24\linewidth}
    \includegraphics[height=2.8cm]{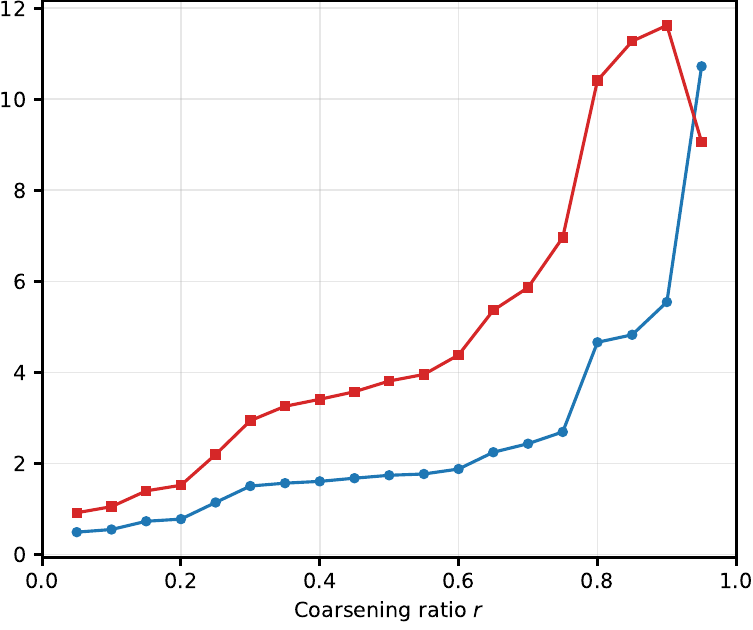}
    \caption*{Theoretical bound}
\end{subfigure}\hfill
\begin{subfigure}[b]{0.24\linewidth}
    \includegraphics[height=2.8cm]{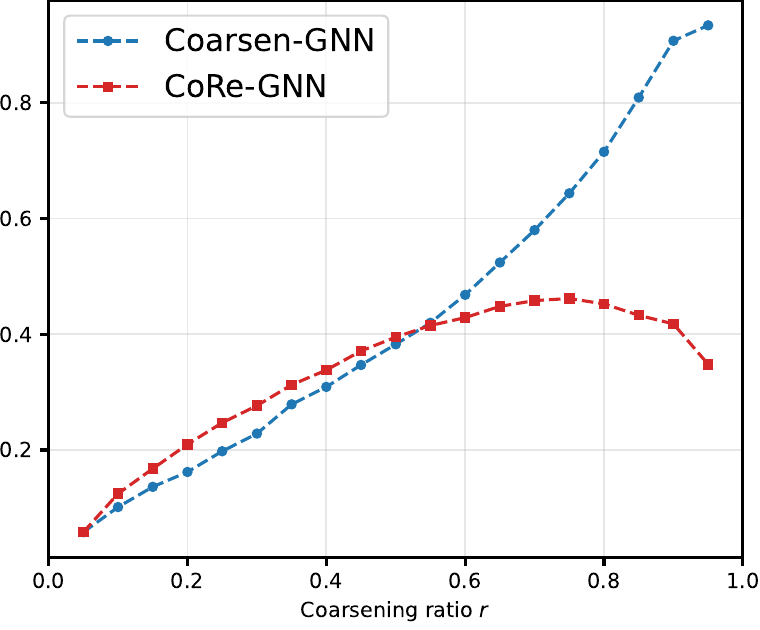}
    \caption*{mean propagation error}
\end{subfigure}
\caption{Cora Dataset}
\end{figure}

\begin{figure}[H]
\centering
\begin{subfigure}[b]{0.24\linewidth}
    \includegraphics[height=2.8cm]{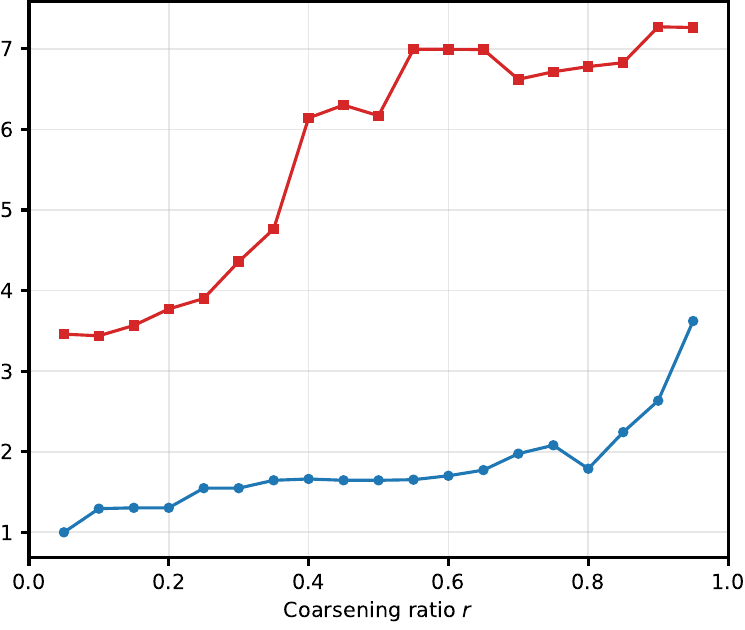}
    \caption*{$C_\Pi$ terms}
\end{subfigure}\hfill
\begin{subfigure}[b]{0.24\linewidth}
    \includegraphics[height=2.8cm]{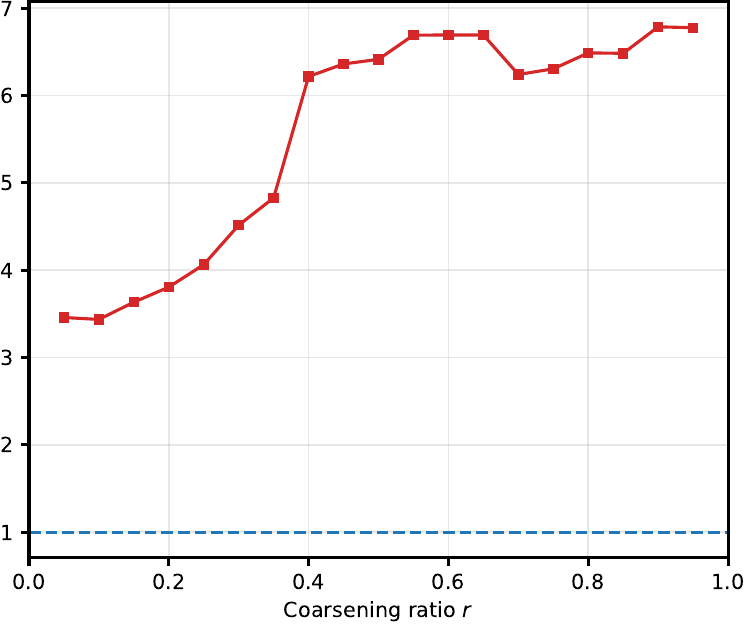}
    \caption*{$C_S$ terms}
\end{subfigure}\hfill
\begin{subfigure}[b]{0.24\linewidth}
    \includegraphics[height=2.8cm]{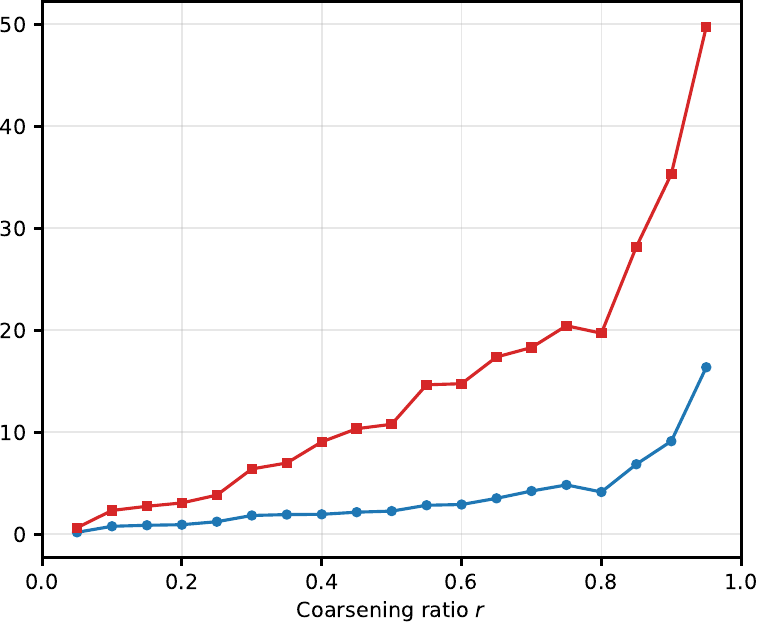}
    \caption*{Theoretical bound}
\end{subfigure}\hfill
\begin{subfigure}[b]{0.24\linewidth}
    \includegraphics[height=2.8cm]{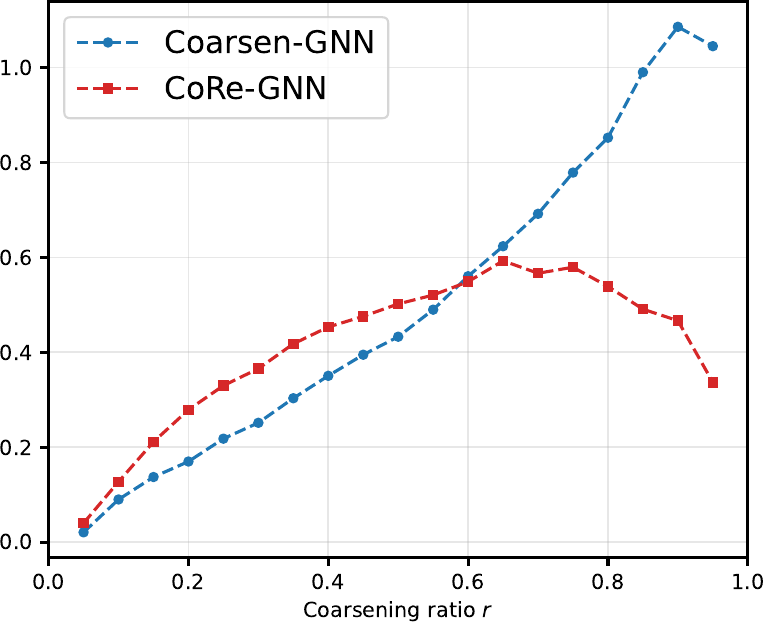}
    \caption*{mean propagation error}
\end{subfigure}
\caption{CiteSeer Dataset}
\end{figure}

\begin{figure}[H]
\centering
\begin{subfigure}[b]{0.24\linewidth}
    \includegraphics[height=2.8cm]{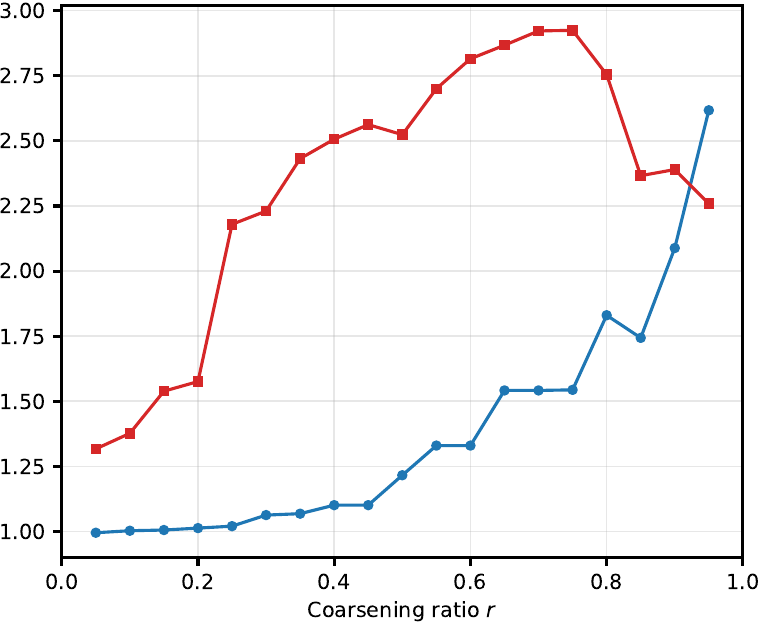}
    \caption*{$C_\Pi$ terms}
\end{subfigure}\hfill
\begin{subfigure}[b]{0.24\linewidth}
    \includegraphics[height=2.8cm]{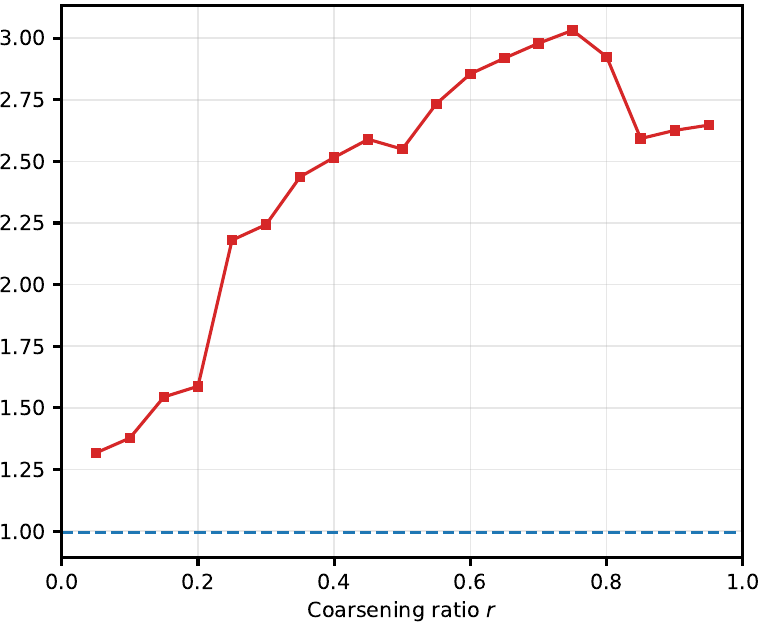}
    \caption*{$C_S$ terms}
\end{subfigure}\hfill
\begin{subfigure}[b]{0.24\linewidth}
    \includegraphics[height=2.8cm]{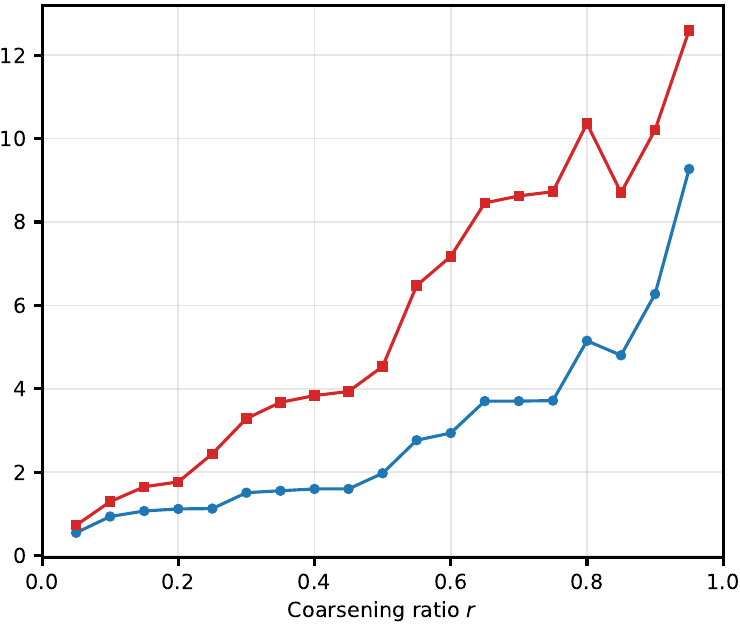}
    \caption*{Theoretical bound}
\end{subfigure}\hfill
\begin{subfigure}[b]{0.24\linewidth}
    \includegraphics[height=2.8cm]{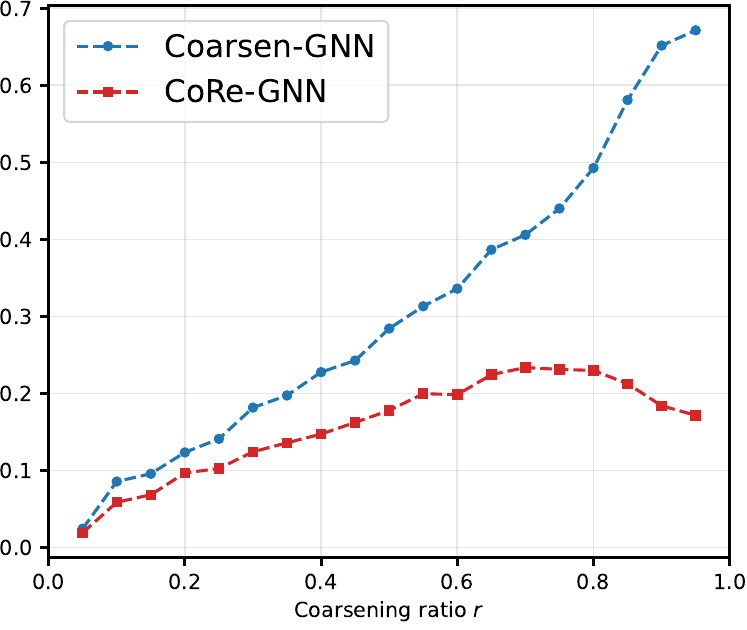}
    \caption*{mean propagation error}
\end{subfigure}
\caption{Chameleon Dataset}
\end{figure}

\begin{figure}[H]
\centering
\begin{subfigure}[b]{0.24\linewidth}
    \includegraphics[height=2.8cm]{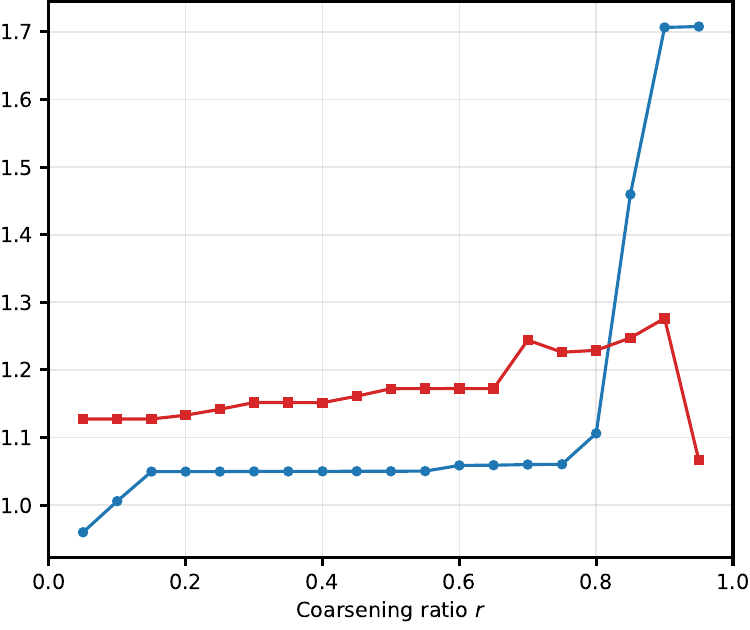}
    \caption*{$C_\Pi$ terms}
\end{subfigure}\hfill
\begin{subfigure}[b]{0.24\linewidth}
    \includegraphics[height=2.8cm]{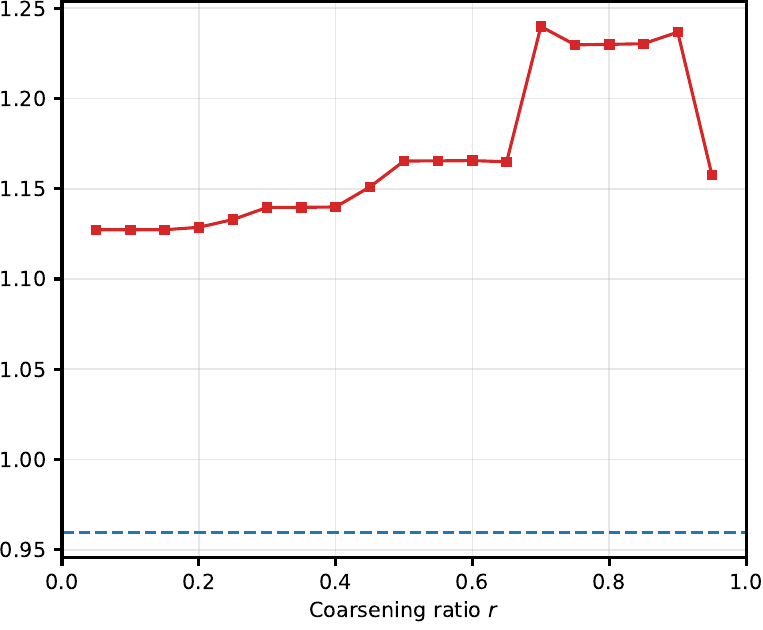}
    \caption*{$C_S$ terms}
\end{subfigure}\hfill
\begin{subfigure}[b]{0.24\linewidth}
    \includegraphics[height=2.8cm]{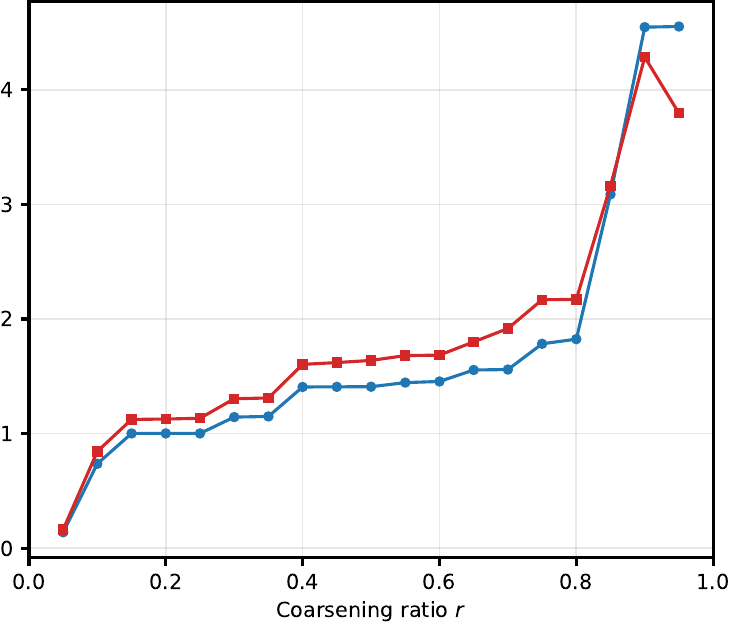}
    \caption*{Theoretical bound}
\end{subfigure}\hfill
\begin{subfigure}[b]{0.24\linewidth}
    \includegraphics[height=2.8cm]{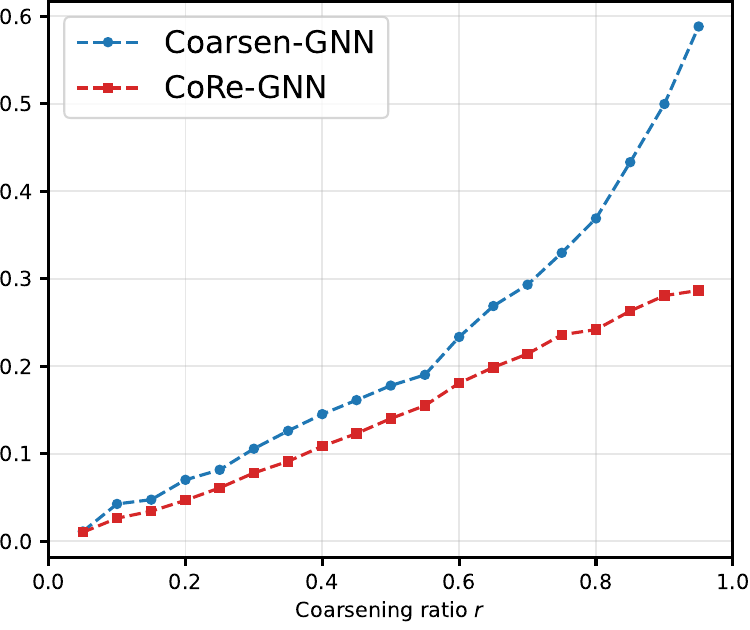}
    \caption*{mean propagation error}
\end{subfigure}
\caption{Squirrel Dataset}
\end{figure}

\subsection{Discussion}
\label{app:constants_discussion}

Across all graphs and coarsening ratios, the mean propagation error averaged over smooth signals is consistently lower for CoRe-GNN than for Coarsen-GNN, even when the theoretical bound constants are not strictly smaller. This gap between the bound and the true error is expected: the bound is a worst-case quantity while the mean error reflects typical behavior on low-frequency signals, which are precisely the signals preserved by spectral coarsening. Heterophily does not affect this analysis, as we sample signals from the low-frequency subspace of $L$ regardless of the graph's label structure.

\section{Detailed Complexity Analysis}
\label{app:complexity}

This section provides a detailed complexity analysis of all methods considered in this paper. Table~\ref{tab:complexity} summarizes time and memory complexity, and Section~\ref{app:efficiency_regimes} analyzes the regimes in which CoRe-GNN is theoretically cheaper than classical GCN.
\subsection{Complexity table}

Table~\ref{tab:complexity} summarizes the time and memory complexity of each method. We use $K$ for the number of layers, $d$ for the hidden dimension, $N$ for the number of nodes, $E = \mathrm{nnz}(\matrixpropag)$ for the number of edges, $n = (1-r)N$ for the number of super-nodes at coarsening ratio $r$, $E_c = \mathrm{nnz}(\sextrac)$ for the number of edges in the coarsened graph, $E_{\mathrm{intra}} = \mathrm{nnz}(\sintra)$ for the number of intra-cluster edges, $b$ for the batch size in nodes and $n_{\mathrm{batch}}$ for the number of batches per epoch.

\input{tables/complexity.tex}

\paragraph{Classical GCN.} At each of the $K$ layers, the sparse matrix-vector product $\matrixpropag H^{(l-1)}$ costs $\mathcal{O}(Ed)$ and the linear transformation costs $\mathcal{O}(Nd^2)$. Storing all intermediate representations for backpropagation requires $\mathcal{O}(KNd)$ memory.

\paragraph{Coarsen-GNN.} Training operates entirely in the coarsened space of $n$ nodes and $E_c$ edges, with memory $\mathcal{O}(Knd)$. The lifting operation $QH_c^{(K)}$ at the final layer costs $\mathcal{O}(Nd)$ due to the sparsity of $Q$, adding an $\mathcal{O}(Nd)$ memory term for storing the lifted predictions.

\paragraph{Cluster-GCN.} Only intra-cluster edges are used, replacing $E$ by $E_{\mathrm{intra}} \leq E$ in the temporal complexity. The memory complexity is $\mathcal{O}(Kbd)$ per batch since only $b$ node representations are stored at any time.

\paragraph{CoRe-GNN.} Each layer performs both the inter-cluster propagation in the coarsened space at cost $\mathcal{O}(E_cd + nd^2)$ and the intra-cluster propagation at cost $\mathcal{O}(E_{\mathrm{intra}}d + Nd^2)$, plus two lifting and reduction operations at cost $\mathcal{O}(Nd)$ each due to the sparsity of $P$ and $Q$. Memory stores two sets of representations: $\mathcal{O}(KNd)$ for the original space and $\mathcal{O}(Knd)$ for the coarsened space.

\paragraph{CoRe-GNN batched.} The intra-cluster term is computed over the batch of $b$ nodes only, summing to $\mathcal{O}(K(E_{\mathrm{intra}}d + Nd^2))$ over all batches. The inter-cluster term is recomputed for each of the $n_{\mathrm{batch}}$ batches per epoch, contributing $\mathcal{O}(Kn_{\mathrm{batch}}(E_cd + nd^2))$, where the $d^2$ term accounts for the linear transformation in the coarsened space recomputed at each batch, giving a total time complexity of $\mathcal{O}(K(n_{\mathrm{batch}}E_cd + E_{\mathrm{intra}}d + (n_{\mathrm{batch}} + N)d^2))$. The key advantage over all compared architectures is that memory complexity is $\mathcal{O}(K(n + b)d)$, independent of $N$, making large-scale training tractable even when the full graph does not fit on GPU.

\subsection{Efficiency regimes for CoRe-GNN}
\label{app:efficiency_regimes}

CoRe-GNN has a lower theoretical time complexity than classical GCN when:
\begin{equation}
\label{eq:efficiency_condition}
E_c + E_{\mathrm{intra}} + 2N + nd \leq E
\end{equation}
Neither $E_c$ nor $E_{\mathrm{intra}}$ admits a closed-form expression as a function of $N$ and $r$ alone, as both depend on the graph structure and the coarsening algorithm. We analyze one tractable case below and then verify the condition empirically on a medium scale sparse graph ogbn-arxiv.

\paragraph{Dense graphs with balanced partitions.} To build intuition in a tractable theoretical setting, we analyze the efficiency condition under two simplifying assumptions: the original graph is dense with $E = N^2$, and the coarsening produces balanced clusters of equal size $N/n = 1/(1-r)$. Under these assumptions, each cluster forms a clique of size $N/n$, giving $E_{\mathrm{intra}} = n \cdot (N/n)^2 = N^2/n = N/(1-r)$, and in the worst case the coarsened graph is also complete with $E_c = n^2 = (1-r)^2N^2$. Condition~\eqref{eq:efficiency_condition} then becomes:
\[
(1-r)^2N^2 + \frac{N}{1-r} + 2N + (1-r)Nd \leq N^2
\]
For large $N$ with $d \ll N$, the dominant terms are $(1-r)^2N^2$ on the left and $N^2$ on the right, and the condition reduces to $(1-r)^2 \leq 1$, which holds for all $r \in [0,1]$. Dense graphs with balanced partitions therefore always satisfy the efficiency condition, regardless of the coarsening ratio, even in the worst case for $E_c$.

\paragraph{Empirical efficiency condition.} While the dense case is always favorable, graphs in practice are sparse. Figure~\ref{fig:complexity_curve} plots the left-hand side of condition~\eqref{eq:efficiency_condition} normalized by $E$, using the actual values of $E_c$ and $E_{\mathrm{intra}}$ computed from METIS coarsening on ogbn-arxiv ($N = 169{,}343$, $E = 1{,}166{,}243$), for $d \in \{64, 128, 256\}$. The condition is satisfied for sufficiently large $r$, but at very high coarsening ratios $E_{\mathrm{intra}}$ grows back as clusters absorb more edges internally and the left-hand side increases again toward the threshold. On this medium-scale sparse graph the gains are moderate, and more pronounced improvements are expected on larger or denser graphs. We also recall that beyond time complexity, CoRe-GNN admits a natural batching scheme that addresses memory scalability independently of this condition.

\begin{figure}[h!]
\centering
\includegraphics[width=0.95\linewidth]{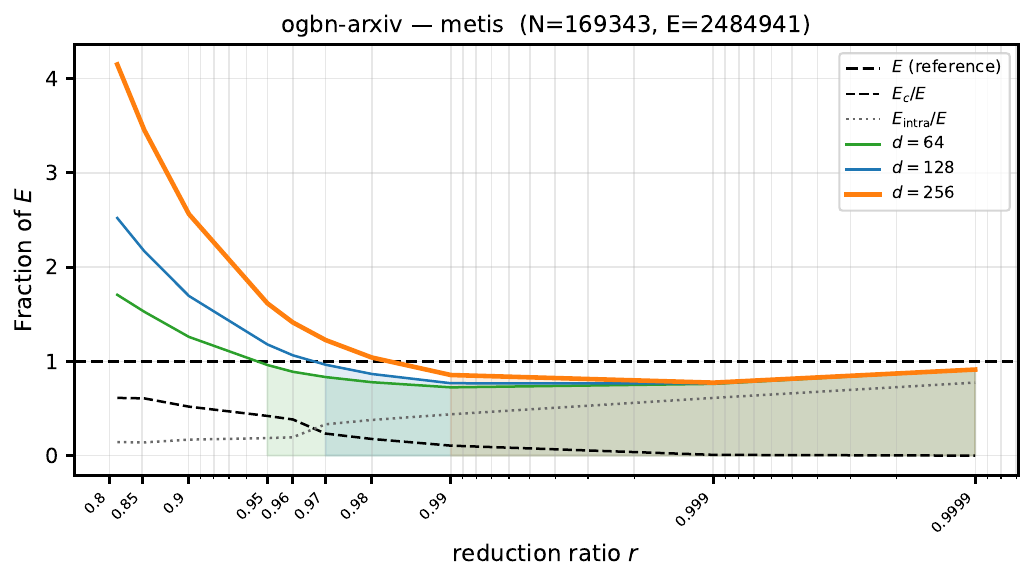}
\caption{Efficiency condition $E_c + E_{\mathrm{intra}} + 2N + nd \leq E$ (normalised by $E$) on ogbn-arxiv with METIS coarsening, for $d \in \{64, 128, 256\}$. Shaded area: regime where CoRe-GNN is computationally cheaper than classical GCN.}
\label{fig:complexity_curve}
\end{figure}

\section{Ablation Study of CoRe-GNN}
\label{app:ablation}

We study four design choices of CoRe-GNN: the normalization of the lifting matrix $Q$ (Table~\ref{tab:abl_qnorm}), the choice of inter-cluster propagation matrix (Table~\ref{tab:abl_psq}), weight sharing between the two branches (Table~\ref{tab:abl_sw}), and layer normalization (Table~\ref{tab:abl_ln}). All ablations are run with the same hyperparameters as the best results presented in the main manuscript, varying only the factor under study.

\paragraph{Normalization of the lifting matrix $Q$.} The lifting matrix $Q$ can be defined in two ways. The combinatorial version $\mathcal{Q}$ has binary entries, with exactly one non-zero entry of value $1$ per row. The normalized version uses $Q = (D+I)^{-1/2} \mathcal{Q} (D_c+I_c)^{1/2}$, where $D$ and $A$ are the degree matrix and adjacency matrix of $G$, and $D_c$, $A_c$ their counterparts on $G_c$. This normalization is associated with the self-loop normalized Laplacian $L = (D+I)^{-1/2}(D-A)(D+I)^{-1/2}$, which satisfies $L = I_N - S$. In both cases, $P = Q^+$ is the Moore-Penrose pseudoinverse of the chosen $Q$, and both verify $L_c = Q^\top L Q$ and $\mathcal{L}_c = D_c - A_c = \mathcal{Q}^\top \mathcal{L}\mathcal{Q} = \mathcal{Q}^\top (D-A)\mathcal{Q} $. Table~\ref{tab:abl_qnorm} shows that the normalized variant performs better on most datasets, with a mean accuracy gain of $+1.51\%$, and is used as default throughout the paper.

\begin{table}[!h]
\centering
\caption{Impact of lifting matrix normalization: normalized self-loop (default) vs combinatorial}
\scalebox{1}{
\begin{tabular}{lcc}
\toprule
Dataset & Q=combinatorial & Q=norm (default) \\
\midrule
Cora & $81.48 \pm 0.39$ & $81.67 \pm 0.64$ \\
CiteSeer & $68.28 \pm 1.46$ & $73.62 \pm 0.49$ \\
PubMed & $79.56 \pm 0.31$ & $79.27 \pm 0.35$ \\
chameleon & $59.87 \pm 3.00$ & $60.68 \pm 1.99$ \\
squirrel & $46.08 \pm 1.00$ & $46.80 \pm 1.13$ \\
Amazon-ratings & $47.44 \pm 0.45$ & $47.45 \pm 0.61$ \\
paris & $39.07 \pm 3.21$ & $42.86 \pm 1.67$ \\
\bottomrule
\end{tabular}
}
\label{tab:abl_qnorm}
\end{table}

\paragraph{Choice of inter-cluster propagation matrix.} CoRe-GNN uses $\sextrac = P\sextra Q$ as the inter-cluster propagation matrix, where $\sextra = \matrixpropag - \sintra$ contains only inter-cluster edges. An alternative is to use the full coarsened propagation matrix $S_c = PSQ$, which additionally includes the intra-cluster term $P\sintra Q$. Table~\ref{tab:abl_psq} shows that $\sextrac$ performs better on most datasets, with a mean accuracy gain of $+2.62\%$. This is consistent with avoiding double-counting the intra-cluster contribution, which is already handled by the $\sintra$ branch.

\begin{table}[!h]
\centering
\caption{Impact of propagation matrix:  $\sextrac = P\sextra Q$ (default) vs full $\newmatrixpropag = PSQ $. }
\scalebox{1}{
\begin{tabular}{lcc}
\toprule
Dataset & $\newmatrixpropag = PSQ $ & $\sextrac = P\sextra Q$ (default) \\
\midrule
Cora & $80.87 \pm 0.74$ & $81.67 \pm 0.64$ \\
CiteSeer & $68.58 \pm 2.52$ & $73.62 \pm 0.49$ \\
PubMed & $78.65 \pm 0.17$ & $79.27 \pm 0.35$ \\
chameleon & $60.94 \pm 2.77$ & $60.68 \pm 1.99$ \\
squirrel & $43.54 \pm 1.59$ & $46.80 \pm 1.13$ \\
Amazon-ratings & $47.44 \pm 0.45$ & $47.45 \pm 0.61$ \\
paris & $33.99 \pm 4.62$ & $42.86 \pm 1.67$ \\
\bottomrule
\end{tabular}
}
\label{tab:abl_psq}
\end{table}

\paragraph{Weight sharing.} By default, CoRe-GNN uses independent weight matrices $\theta^{(l)}$ and $\theta_c^{(l)}$ for the intra-cluster and inter-cluster branches respectively. A simpler variant ties the two weights, setting $\theta^{(l)} = \theta_c^{(l)}$. Table~\ref{tab:abl_sw} shows that independent weights perform better on most datasets, with a mean accuracy gain of $+2.30\%$. This suggests that the two branches benefit from learning distinct transformations, as the intra-cluster and inter-cluster propagations operate on different graph structures and at different scales.

\begin{table}[!h]
\centering
\caption{Impact of weight sharing (SW) between GCN branches}
\scalebox{1}{
\begin{tabular}{lcc}
\toprule
Dataset & SW=True &SW=False (default)  \\
\midrule
Cora &  $81.57 \pm 0.61$ & $81.67 \pm 0.64$  \\
CiteSeer  & $70.94 \pm 1.07$ & $ 73.62 \pm 0.49$  \\
PubMed &  $80.41 \pm 0.23$ & $79.27 \pm 0.35$  \\
chameleon & $58.16 \pm 2.60$ & $60.68 \pm 1.99$  \\
squirrel & $42.92 \pm 2.07$ & $46.80 \pm 1.13$  \\
Amazon-ratings & $47.28 \pm 0.46$ & $47.45 \pm 0.61$  \\
paris & $34.94 \pm 1.90$ & $42.86 \pm 1.67$  \\
\bottomrule
\end{tabular}
}
\label{tab:abl_sw}
\end{table}

\paragraph{Layer normalization.} Layer normalization normalizes representations across the feature dimension, independently of the number of nodes. This makes it compatible with our architecture where cluster sizes vary across batches, unlike batch normalization which depends on the number of nodes and would produce inconsistent statistics across batches of different sizes. Table~\ref{tab:abl_ln} shows that the best choice is dataset-dependent: layer normalization is beneficial on Amazon Ratings and Paris, while the absence of normalization performs better on the remaining datasets. We therefore select the normalization setting per dataset during hyperparameter tuning.

\begin{table}[!h]
\centering
\caption{Impact of layer normalization. Bold = chosen setting.}
\scalebox{1}{
\begin{tabular}{lcc}
\toprule
Dataset & LN True & LN False \\
\midrule
Cora & $76.07 \pm 2.33$ & $\mathbf{81.67 \pm 0.64}$ \\
CiteSeer & $69.46 \pm 2.75$ & $\mathbf{73.62 \pm 0.49}$ \\
PubMed & $75.36 \pm 1.48$ & $\mathbf{79.27 \pm 0.35}$ \\
chameleon & $59.41 \pm 2.41$ & $\mathbf{60.68 \pm 1.99}$ \\
squirrel & $40.64 \pm 1.30$ & $\mathbf{46.80 \pm 1.13}$ \\
Amazon-ratings & $\mathbf{47.45 \pm 0.61}$ & $41.66 \pm 0.39$ \\
paris & $\mathbf{42.86 \pm 1.67}$ & $38.59 \pm 1.32$ \\
\bottomrule
\end{tabular}
}
\label{tab:abl_ln}
\end{table}

\section{Pseudo code of Proposed methods}
\label{app:pseudocode}
Algorithm~\ref{alg:coarsen-training} describes the training forward pass of Coarsen-GNN, which operates entirely in the coarsened space and maps the final predictions back to the original nodes.

\begin{algorithm}[H]
\small
\caption{Coarsen-GNN: Training Forward Pass}\label{alg:coarsen-training}
\begin{algorithmic}[1]
\Require $H^{(0)} \in \mathbb{R}^{N \times d}$ \textit{(original node features)},\;
         $\newmatrixpropag = PSQ$ \textit{(precomputed)},\;
         $P, Q$,\;
         weights $(\theta_c^{(l)})_{l=1}^K$,\;
         train mask on original graph
\State $H_c^{(0)} \leftarrow P\, H^{(0)}$
         \hfill\textit{// reduce to coarsened space}
\For{$l = 1, \ldots, K$}
  \State $H_c^{(l)} \leftarrow \sigma\left(\newmatrixpropag\, H_c^{(l-1)}\theta_c^{(l)}\right)$
         \hfill\textit{// propagation in coarsened space}
\EndFor
\State $H^{(K)} \leftarrow Q\, H_c^{(K)}$
         \hfill\textit{// lift predictions to original space}
\State compute loss on $H^{(K)}[\text{train\_mask}]$
\State backpropagate and update $(\theta_c^{(l)})$
\end{algorithmic}
\end{algorithm}

Algorithm~\ref{alg:coregnn-full} describes the full batch training forward pass of CoRe-GNN, maintaining two parallel states $H_c^{(l)}$ and $H^{(l)}$ at each layer.

\begin{algorithm}[H]
\small
\caption{CoRe-GNN: Full Batch Training Forward Pass}\label{alg:coregnn-full}
\begin{algorithmic}[1]
\Require $H^{(0)} \in \mathbb{R}^{N \times d}$ \textit{(original node features)},\;
         $P, Q, \sextrac$ \textit{(precomputed)},\;
         $\sintra$,\;
         weights $(\theta^{(l)}, \theta_c^{(l)})_{l=1}^K$,\;
         train mask on original graph
\State $H_c^{(0)} \leftarrow P\, H^{(0)}$
         \hfill\textit{// reduce to coarsened space}
\For{$l = 1, \ldots, K$}
  \State $H_c^{(l)} \leftarrow \sigma\left(\sextrac\, H_c^{(l-1)}\theta_c^{(l)}\right)$
         \hfill\textit{// inter-cluster propagation, coarsened space}
  \State $H^{(l)} \leftarrow \sigma\left(\sintra\, H^{(l-1)}\theta^{(l)}\right) + Q\, H_c^{(l)}$
         \hfill\textit{// intra-cluster + lift inter-cluster}
  \State $H_c^{(l)} \leftarrow P\, H^{(l)}$
         \hfill\textit{// reduce to coarsened space for next layer}
\EndFor
\State compute loss on $H^{(K)}[\text{train\_mask}]$
\State backpropagate and update $(\theta^{(l)}, \theta_c^{(l)})$
\end{algorithmic}
\end{algorithm}

Algorithm~\ref{alg:coregnn-batched-extended} describes the batched training forward pass, where only the clusters selected in the current batch perform full intra-cluster propagation while the remaining super-nodes receive a self-loop proxy.

\begin{algorithm}[h!]
\small
\caption{CoRe-GNN: Batched Training Forward Pass (extended)}\label{alg:coregnn-batched-extended}
\begin{algorithmic}[1]
\Require $H^{(0)} \in \mathbb{R}^{N \times d}$ \textit{(original node features)},\;
         $P_\mathcal{B}, Q_\mathcal{B}$ \textit{(restricted to batch)},\;
         $\sextrac, \sintra[\mathcal{B}]$ \textit{(precomputed)},\;
         batch $\mathcal{B} \subseteq [n]$,\;
         weights $(\theta^{(l)}, \theta_c^{(l)})_{l=1}^K$,\;
         train mask on original graph
\State $H_c^{(0)} \leftarrow P\, H^{(0)}$
         \hfill\textit{// reduce full graph to coarsened space}
\State $H_{\mathcal{B}}^{(0)} \leftarrow H^{(0)}[\mathcal{B}]$
         \hfill\textit{// extract batch nodes}
\For{$l = 1, \ldots, K$}
  \State $H_c^{(l)} \leftarrow \sigma\left(\sextrac\, H_c^{(l-1)}\theta_c^{(l)}\right)$
         \hfill\textit{// all $n$ super-nodes, coarsened space}
  \State $H_{\mathcal{B}}^{(l)} \leftarrow \sigma\left(\sintra[\mathcal{B}]\, H_{\mathcal{B}}^{(l-1)}\theta^{(l)}\right)$
         \hfill\textit{// intra-cluster, batch nodes only}
  \State $H_{\mathcal{B}}^{(l)} \leftarrow H_{\mathcal{B}}^{(l)} + Q_\mathcal{B}\, H_c^{(l)}[\mathcal{B}]$
         \hfill\textit{// lift inter-cluster to batch nodes}
  \State $H_c^{(l)}[\mathcal{B}] \leftarrow P_\mathcal{B}\, H_{\mathcal{B}}^{(l)}$
         \hfill\textit{// reduce batch nodes back to super-nodes}
  \State $H_c^{(l)}[\bar{\mathcal{B}}] \leftarrow \sigma\left(H_c^{(l-1)}[\bar{\mathcal{B}}]\,\theta^{(l)}\right) + H_c^{(l)}[\bar{\mathcal{B}}]$
         \hfill\textit{// self-loop proxy for non-batch super-nodes}
\EndFor
\State $\text{train\_mask}_\mathcal{B} \leftarrow \text{train\_mask}[\text{index\_in\_original\_batch}]$
         \hfill\textit{// restrict mask to current batch}
\State compute loss on $H_{\mathcal{B}}^{(K)}[\text{train\_mask}_\mathcal{B}]$
\State backpropagate and update $(\theta^{(l)}, \theta_c^{(l)})$
\end{algorithmic}
\end{algorithm}

\subsection{Pseudo-code of exact inference}
\label{app:inference}
Unlike training, exact inference introduces no approximation: every cluster is processed exactly, layer by layer. At each layer, $\sintra[\mathcal{B}]$ is loaded batch by batch from CPU to GPU on demand, while $H^{(l)}$ is written back to CPU after each layer to free GPU memory. This strategy is not applicable during training since backpropagation requires retaining the full computational graph across all layers, which is precisely why the self-loop proxy of Algorithm~\ref{alg:core-batched} is needed at training time.

\begin{algorithm}[h!]
\small
\caption{CoRe-GNN: Exact Layer-by-Layer Inference}\label{alg:core-inference}
\begin{algorithmic}[1]
\Require $H^{(0)} \in \mathbb{R}^{N \times d}$ \textit{(on GPU)},\;
         $P, Q, \sextrac$ \textit{(on GPU)},\;
         $\sintra[\mathcal{B}]$ \textit{(pre-split by cluster, on CPU)},\;
         weights $(\theta^{(l)}, \theta_c^{(l)})_{l=1}^K$
\For{$l = 1, \ldots, K$}
  \State $H_c^{(l)} \leftarrow \sigma\left(\sextrac\, P H^{(l-1)} \theta_c^{(l)}\right)$
         \hfill\textit{// inter-cluster, fully on GPU}
  \State initialize $H^{(l)} \leftarrow Q H_c^{(l)}$
         \hfill\textit{// lift to original space, on GPU}
  \For{each cluster batch $\mathcal{B}$}
    \State load $\sintra[\mathcal{B}]$ from CPU to GPU
    \State $H^{(l)}[\mathcal{B}] \leftarrow H^{(l)}[\mathcal{B}] + \sigma\left(\sintra[\mathcal{B}]\, H^{(l-1)}[\mathcal{B}]\,\theta^{(l)}\right)$
           \hfill\textit{// intra-cluster batch, on GPU}
    \State discard $\sintra[\mathcal{B}]$ from GPU
  \EndFor
  \State discard $H^{(l-1)}$
\EndFor
\State \textbf{return} $H^{(K)}$
\end{algorithmic}
\end{algorithm}

\section{Datasets}
\label{app:datasets}

We restrict Cora and Citeseer to their principal connected component (PCC) as coarsening algorithms tend to process small connected components first, leading to poor results at low coarsening ratios. This makes direct comparison with other benchmarks more difficult as the training and evaluation sets differ. The characteristics of these datasets and their PCC are reported in Table~\ref{tab:caracCora}. Details for all datasets are in Table~\ref{tab:dataset_stats}.

\paragraph{Homophilic datasets.} Cora, Citeseer and Pubmed~\cite{yang2016revisiting} are citation networks where nodes are papers and edges are citations. Labels correspond to research topics and are strongly homophilic.

\paragraph{Heterophilic datasets.} Chameleon and Squirrel~\cite{rozemberczki2021multi} are Wikipedia page networks. Amazon Ratings~\cite{platonovcritical} is a co-purchase network where nodes are products. These datasets exhibit low homophily, making them challenging for local message passing methods.

\paragraph{Long-range datasets.} Paris and London~\cite{liang2025towards} are road networks specifically designed to measure long-range interactions in graph machine learning.

\paragraph{Large-scale datasets.} ogbn-arxiv, Reddit~\cite{hamilton2017inductive} and ogbn-products~\cite{hu2020open} are large graphs with hundreds of thousands to millions of nodes, used to evaluate scalability.

\begin{table}[ht]
\centering
\caption{Characteristics of Cora and CiteSeer Datasets and their principal connected components}
\vspace{10pt}
\begin{tabular}{lccccc}
\toprule
\textbf{Dataset} & \textbf{\# Nodes} & \textbf{\# Edges}  & \textbf{\# Train Nodes} & \textbf{\# Val Nodes} & \textbf{\# Test Nodes} \\
\midrule
Cora & 2,708 & 10,556 & 140 & 500 & 1,000   \\
Cora PCC & 2,485 & 10,138  & 122 & 459 & 915 \\
\midrule
Citeseer & 3,327 & 9,104 & 120 & 500 & 1,000   \\
Citeseer PCC & 2,120 & 7,358   & 80 & 328 & 663 \\

\bottomrule
\label{tab:caracCora}
\end{tabular}
\end{table}

\input{tables/dataset_stats.tex}

\section{Coarsening Algorithms}
\label{app:coarsening_ablation}
\label{app:coarsening_algos}

We briefly describe the three coarsening algorithms used in this work. Among them, only METIS enforces \emph{balanced} partitions by construction, while Graclus and Loukas do not impose cluster size constraints. Due to its higher computational cost, Loukas does not scale to very large graphs and was therefore not evaluated on ogbn-products.

\paragraph{METIS.}~\cite{karypis1997metis} partitions the graph into $n$ balanced clusters minimizing the normalized cut. Since METIS takes a target number of nodes as input rather than a ratio, we compute $n = \lfloor(1-r)N\rfloor$ from the desired coarsening ratio $r$.

\paragraph{Graclus.}~\cite{dhillon2007weighted} applies a multilevel clustering based on the ratio cut. Since Graclus does not directly accept a target coarsening ratio, we apply it iteratively until $n \leq (1-r)N$. The achieved ratio is therefore approximate, which makes this algorithm better suited to large graphs where small deviations in the number of super-nodes are acceptable.

\subsection{Adaptation of Loukas Algorithm}
\label{app:loukasalg}

The Loukas algorithm~\cite{loukas2019graph} constructs a coarsened graph by minimizing the RSA constant, ensuring that smooth signals on the original graph are well preserved after coarsening.
Algorithm~\ref{algo:loukas_adapted} presents our adaptation of the Loukas coarsening algorithm. The algorithm iteratively selects contraction sets, merges the corresponding nodes, and updates the graph structure. Our main modification replaces the combinatorial Laplacian by any $\Delta$-Laplacian $L = \Delta \mathcal{L} \Delta$~\cite{joly2025taxonomy}, and removes the diagonal of $A_c$ at each iteration, which empirically yields lower RSA values. 

\begin{algorithm}[H]
\small
\caption{Loukas Algorithm (adapted)}\label{algo:loukas_adapted}
\begin{algorithmic}[1]
\Require Adjacency matrix $A$, Laplacian $L$, coarsening ratio $r$, preserved space $\mathcal{R}$, max nodes per step $n_e$
\State $n_{\mathrm{obj}} \leftarrow \lfloor N(1-r) \rfloor$
\State compute $B_0 \leftarrow VV^\top L^{-1/2}$ with $V$ an orthonormal basis of $\mathcal{R}$
\State $Q \leftarrow I_N$
\While{$n \geq n_{\mathrm{obj}}$}
    \State generate candidate contraction sets
    \For{each contraction set $\mathcal{C}$}
        \State compute $\mathrm{cost}(\mathcal{C}, B_{l-1}, L_{l-1}) = \frac{\| \Pi_\mathcal{C} B_{l-1}(B_{l-1}^\top L_{l-1} B_{l-1})^{-1/2} \|_{L_\mathcal{C}}}{|\mathcal{C}|-1}$
    \EndFor
    \State sort contraction sets by increasing cost
    \State select non-overlapping sets until $\min(n - n_{\mathrm{obj}}, n_e)$ nodes are merged
    \State compute binary lifting matrix $\mathcal{Q}_l$ from selected contraction sets
    \State $Q_l  \leftarrow  \text{Normalize $\mathcal{Q}_l$ according to the laplacian (i.e if $L = \Delta L \Delta$ then $Q = \Delta_{l-1}^{-1} \mathcal{Q}_l \Delta_l $}  $)
    \State $Q \leftarrow Q \cdot Q_l $
    \State $P_l \leftarrow \text{Moore-Penrose}(\mathcal{Q}_l) $
    \State $B_l \leftarrow P_l B_{l-1}$
    \State $A_l \leftarrow \mathcal{Q}_l^\top A_{l-1} \mathcal{Q}_l - \mathrm{diag}(\mathcal{Q}_l^\top A_{l-1} \mathcal{Q}_l \mathbf{1}_n)$
    \State $L_l \leftarrow Q_l^\top L_{l-1} Q_l$
\EndWhile
\State $P \leftarrow (Q^\top Q)^{-1} Q^\top$
\Return $A_c,\, Q,\, P$
\end{algorithmic}
\end{algorithm}

\paragraph{Candidate contraction sets.} Contraction sets can be defined either as pairs of adjacent nodes (edge-based) or as full node neighborhoods. As neighborhoods tend to be large in our graphs, the neighborhood variant proves impractical at low coarsening ratios. We therefore use edge-based contraction sets and control the greedy behavior via $n_e$.

\paragraph{Coarsening Hyperparameters of this paper}
By default in this paper, $L$ is the self loop normalized laplacian $L = (D+I)^{-1/2}(D-A)(D+I)^{-1/2} $ such that $S= I_N - L$. The preserved space is the first $K$ eigenvectors of $L$ with $K= 100$. $n_e$ the number of edges contracted at each step is defined as $10\%$ of the original graph.

\section{Hyperparameters}
\label{app:hyperparameters}

\subsection{Presentation of hyperparameters}

All experiments use the Adam optimizer with a ReduceLROnPlateau scheduler, trained for up to 800 epochs with dropout fixed at $0.5$ on a single NVIDIA A40 GPU. The hyperparameter grid covers the coarsening algorithm (METIS, Graclus, Loukas), coarsening ratio $r$, hidden dimension, network depth $K$, learning rate, weight decay, and layer normalization. Due to computational constraints, the full grid search was run over 3 seeds, while the final results reported in the main paper are averaged over 10 seeds.

\subsection{Chosen hyperparameters for main results}

We report below the hyperparameters selected per dataset and per model for the results presented in the main paper.

\input{tables/best_hp_for_table}

\subsection{Hyperparameter grids}
Table~\ref{tab:hyperparameter_grid} reports the full hyperparameter grid explored during the search phase.
\input{tables/hyperparameter_grid.tex}

\section{Graph structure underlying Figure~\ref{fig:prop_matrices_v2}}Figure~\ref{fig:graph_bilevel_5cluster} shows the original graph $G$ and its coarsened counterpart $G_c$ corresponding to the partition illustrated in Figure~\ref{fig:prop_matrices_v2}.

\input{tikz_figure/graph_bilevel_5cluster.tikz}

\end{document}

%% file: tikz_figure/coregnn_layer_flow.tikz
\begin{figure}[t]
\centering
\resizebox{\linewidth}{!}{%
\begin{tikzpicture}[
    every node/.style={inner sep=0pt},
    gnode/.style={circle, fill=#1, draw=#1!70!black, minimum size=5pt},
    gnode/.default={black},
    supernode/.style={circle, fill=#1, draw=#1!70!black,
                      minimum size=9pt, inner sep=0pt},
    supernode/.default={black},
    box/.style={draw=gray!60, rounded corners=4pt, thick, align=center,
                fill=white, font=\small, inner sep=6pt},
    arr/.style={-stealth, thick, gray!75},
    plusnode/.style={circle, draw=black!70, thick,
                     minimum size=16pt, font=\large, fill=white},
    signode/.style={circle, draw=black!60, semithick,
                    minimum size=14pt, font=\small, fill=white},
    lblfont/.style={font=\small, text=gray!65!black},
]

\definecolor{c0}{RGB}{80,130,220}
\definecolor{c1}{RGB}{210,80,80}
\definecolor{c2}{RGB}{60,160,80}

\def\ymid{0.1}    
\def\ytop{1.3}    
\def\ybot{-1.1}   

\node[gnode=c0] (L0) at (0,   1.1) {};
\node[gnode=c0] (L1) at (0.6, 0.6) {};
\node[gnode=c0] (L2) at (-0.1,0.2) {};
\draw[c0,thick] (L0)--(L1) (L1)--(L2);

\node[gnode=c1] (L3) at (-0.4,-0.3) {};
\node[gnode=c1] (L4) at (0.2, -0.2) {};
\node[gnode=c1] (L5) at (-0.1,-0.9) {};
\node[gnode=c1] (L6) at (0.6, -0.7) {};
\draw[c1,thick] (L3)--(L4) (L4)--(L6) (L3)--(L5) (L5)--(L6);

\node[gnode=c2] (L7)  at (1.1, 0.8) {};
\node[gnode=c2] (L8)  at (1.6, 0.3) {};
\node[gnode=c2] (L9)  at (1.4,-0.4) {};
\node[gnode=c2] (L10) at (0.9,-0.1) {};
\node[gnode=c2] (L11) at (1.7,-0.8) {};
\draw[c2,thick] (L7)--(L8) (L8)--(L9) (L9)--(L10) (L7)--(L10) (L9)--(L11);

\draw[gray,thin] (L1)--(L4) (L2)--(L3) (L1)--(L10)
                        (L6)--(L9) (L7)--(L0); 

\node[font=\normalsize\bfseries] at (0.65,-1.25) {$H^{(l)}$};


\node[plusnode] (plus)     at (10.5, \ymid)  {$+$};
\node[box] (topbox)        at (5.7,  \ytop)
  {$S^{\mathrm{intra}}\, H^{(l)}\, \theta^{(l)}$};
\node[signode] (sigtop)    at (7.95,  \ytop)  {$\sigma$};
\node[signode] (sigbot)    at (7.95,  \ybot)  {$\sigma$};

\node[box] (reducebox)     at (3.8,  \ybot)  {Reduce\\[2pt]$P$};
\node[box] (botbox)        at (6.1,  \ybot)
  {$S_{c}^{\mathrm{inter}}H^{(l)}_c\theta_c^{(l)}$};
\node[box] (liftbox)       at (9.0,  \ybot)  {Lift\\[2pt]$Q$};

\draw[arr] (1.85, \ymid) to[out=40,  in=180] (3.0, \ytop) -- (topbox.west);
\draw[arr] (1.85, \ymid) to[out=-40, in=180] (3.0, \ybot) -- (reducebox.west);

\draw[arr] (topbox.east) -- (sigtop.west);
\draw[arr] (sigtop.east) -- ++(1.0,0) to[out=0, in=90] (plus.north);

\begin{scope}[opacity=0.3]
\node[gnode=c0] (T0) at (4.3, 2.4) {};
\node[gnode=c0] (T1) at (4.8, 2.1) {};
\node[gnode=c0] (T2) at (4.3, 1.95) {};
\draw[c0,semithick] (T0)--(T1) (T1)--(T2);
\end{scope}

\node[gnode=c1] (T3) at (5.41, 2.27) {};
\node[gnode=c1] (T4) at (5.74, 2.33) {};
\node[gnode=c1] (T5) at (5.57, 1.94) {};
\node[gnode=c1] (T6) at (5.96, 2.05) {};
\draw[c1,semithick] (T3)--(T4) (T4)--(T6) (T3)--(T5) (T5)--(T6);

\begin{scope}[opacity=0.3]
\node[gnode=c2] (T7)  at (6.62, 2.45) {};
\node[gnode=c2] (T8)  at (6.82, 2.29) {};
\node[gnode=c2] (T9)  at (6.74, 2.08) {};
\node[gnode=c2] (T10) at (6.54, 2.17) {};
\node[gnode=c2] (T11) at (7.02, 1.88) {};
\draw[c2,semithick]
  (T7)--(T8) (T8)--(T9) (T9)--(T10) (T7)--(T10) (T9)--(T11);
\end{scope}

\draw[arr] (reducebox.east) -- (botbox.west);
\draw[arr] (botbox.east)    -- (sigbot.west);
\draw[arr] (sigbot.east)    -- (liftbox.west);
\draw[arr] (liftbox.east) to[out=0, in=-90] (plus.south);

\node[supernode=c0] (S0) at (5.50, +0.08) {};
\node[supernode=c1] (S1) at (5.85, -0.42) {};
\node[supernode=c2] (S2) at (6.70, -0.18) {};
\draw[gray!65, line width=1.4pt] (S0)--(S1) (S1)--(S2) (S0)--(S2);


\draw[arr] (plus.east) -- (11.5, \ymid);

\node[gnode=c0] (R0) at (11.7, 1.1) {};
\node[gnode=c0] (R1) at (12.3, 0.6) {};
\node[gnode=c0] (R2) at (11.6, 0.2) {};
\draw[c0,thick] (R0)--(R1) (R1)--(R2);

\node[gnode=c1] (R3) at (11.3,-0.3) {};
\node[gnode=c1] (R4) at (11.9,-0.2) {};
\node[gnode=c1] (R5) at (11.6,-0.9) {};
\node[gnode=c1] (R6) at (12.3,-0.7) {};
\draw[c1,thick] (R3)--(R4) (R4)--(R6) (R3)--(R5) (R5)--(R6);

\node[gnode=c2] (R7)  at (12.8, 0.8) {};
\node[gnode=c2] (R8)  at (13.3, 0.3) {};
\node[gnode=c2] (R9)  at (13.1,-0.4) {};
\node[gnode=c2] (R10) at (12.6,-0.1) {};
\node[gnode=c2] (R11) at (13.4,-0.8) {};
\draw[c2,thick] (R7)--(R8) (R8)--(R9) (R9)--(R10) (R7)--(R10) (R9)--(R11);

\draw[gray,thin]
  (R1)--(R4) (R2)--(R3) (R1)--(R10) (R6)--(R9) (R7)--(R0); 

\node[font=\normalsize\bfseries] at (12.35,-1.25) {$H^{(l+1)}$};

\end{tikzpicture}%
}

\caption{One CoRe-GNN batched layer. $H^{(l)}$ flows through two parallel branches. Top: intra-cluster message passing, where only the active cluster (red) is processed per step while others are handled in separate batches (in the full-batch setting, all clusters are processed simultaneously). Bottom: inter-cluster propagation on the coarsened graph over all $n$ super-nodes at once, reduce ($P$), propagate ($\sextrac$), activate ($\sigma$), lift back ($Q$). Both outputs are summed to produce $H^{(l+1)}$.}
\label{fig:core_gnn_layer}
\end{figure}

%% file: tikz_figure/prop_matrices_5cluster.tikz
\begin{figure}[ht]

\centering
\resizebox{\linewidth}{!}{%
\begin{tikzpicture}[x=0.18cm, y=-0.18cm]

  \colorlet{KC0}{blue!60}
  \colorlet{KC1}{red!55}
  \colorlet{KC2}{green!65!black}
  \colorlet{KC3}{orange!80}
  \colorlet{KC4}{violet!55}
  \colorlet{Kfull}{cyan!15!blue!30}

  \def\Ns{20}
  \pgfmathsetmacro{\DX}{(\Ns+4)*0.18}  

  \def\hrfix{0.5}

  \newcommand{\clusterbordersV}{%
    \foreach \k in {0,4,8,12,15,20}{%
      \draw[black!50, line width=0.6pt]
        (\k,0)--(\k,\Ns) (0,\k)--(\Ns,\k);%
    }%
    \draw[black, line width=0.8pt] (0,0) rectangle (\Ns,\Ns);%
  }

  \begin{scope}[xshift=0cm]
    \fill[white] (0,0) rectangle (\Ns,\Ns);
    \foreach \c/\r in {
      0/0,1/1,2/2,3/3,4/4,5/5,6/6,7/7,8/8,9/9,
      10/10,11/11,12/12,13/13,14/14,15/15,16/16,17/17,18/18,19/19,
      1/0,0/1,  2/1,1/2,  3/0,0/3,
      5/4,4/5,  6/5,5/6,  7/4,4/7,
      9/8,8/9,  10/9,9/10,  11/8,8/11,
      13/12,12/13,  14/13,13/14,
      16/15,15/16,  17/16,16/17,  18/17,17/18,  19/15,15/19,
      5/1,1/5,  4/3,3/4,  6/2,2/6,
      9/0,0/9,  8/2,2/8,  10/3,3/10,
      12/6,6/12,  14/7,7/14,  13/5,5/13,
      15/10,10/15,  17/11,11/17,  19/9,9/19,  16/8,8/16,
      16/12,12/16,  18/14,14/18,  15/13,13/15%
    }{
      \fill[gray!40] (\c,\r) rectangle (\c+1,\r+1);
    }
    \clusterbordersV
    \node[above=4pt, font=\small\bfseries] at (10,0) {GCN};
    \node[below=5pt, font=\small]          at (10,\Ns) {$S$};
  \end{scope}

  \begin{scope}[xshift=\DX cm]
    \fill[white] (0,0) rectangle (\Ns,\Ns);
    \foreach \c/\r in {
      0/0,1/1,2/2,3/3, 1/0,0/1, 2/1,1/2, 3/0,0/3}{
      \fill[KC0] (\c,\r) rectangle (\c+1,\r+1);}
    \foreach \c/\r in {
      4/4,5/5,6/6,7/7, 5/4,4/5, 6/5,5/6, 7/4,4/7}{
      \fill[KC1] (\c,\r) rectangle (\c+1,\r+1);}
    \foreach \c/\r in {
      8/8,9/9,10/10,11/11, 9/8,8/9, 10/9,9/10, 11/8,8/11}{
      \fill[KC2] (\c,\r) rectangle (\c+1,\r+1);}
    \foreach \c/\r in {
      12/12,13/13,14/14, 13/12,12/13, 14/13,13/14}{
      \fill[KC3] (\c,\r) rectangle (\c+1,\r+1);}
    \foreach \c/\r in {
      15/15,16/16,17/17,18/18,19/19,
      16/15,15/16, 17/16,16/17, 18/17,17/18, 19/15,15/19}{
      \fill[KC4] (\c,\r) rectangle (\c+1,\r+1);}
    \clusterbordersV
    \node[above=4pt, font=\small\bfseries] at (10,0) {Cluster-GCN};
    \node[below=5pt, font=\small]          at (10,\Ns) {$S^{\mathrm{intra}}$};
  \end{scope}

  \pgfmathsetmacro{\XD}{2*\DX}
  \begin{scope}[xshift=\XD cm]
    \fill[white] (0,0) rectangle (\Ns,\Ns);
    \foreach \ci/\cs/\csz/\col in {
      0/0/4/KC0, 1/4/4/KC1, 2/8/4/KC2, 3/12/3/KC3, 4/15/5/KC4}{
      \pgfmathsetmacro{\cx}{\cs+\csz/2}
      \fill[\col] (\cx-\hrfix,\cx-\hrfix) rectangle (\cx+\hrfix,\cx+\hrfix);
      \draw[black, line width=0.6pt]
        (\cx-\hrfix,\cx-\hrfix) rectangle (\cx+\hrfix,\cx+\hrfix);
    }
    \foreach \csi/\cszi/\csj/\cszj/\ca/\cb in {
      0/4/4/4/KC0/KC1,   4/4/0/4/KC1/KC0,
      0/4/8/4/KC0/KC2,   8/4/0/4/KC2/KC0,
      4/4/12/3/KC1/KC3,  12/3/4/4/KC3/KC1,
      8/4/15/5/KC2/KC4,  15/5/8/4/KC4/KC2,
      12/3/15/5/KC3/KC4, 15/5/12/3/KC4/KC3%
    }{
      \pgfmathsetmacro{\cx}{\csi+\cszi/2}
      \pgfmathsetmacro{\cy}{\csj+\cszj/2}
      \fill[\ca] (\cx-\hrfix,\cy-\hrfix) rectangle (\cx,         \cy+\hrfix);
      \fill[\cb] (\cx,        \cy-\hrfix) rectangle (\cx+\hrfix,  \cy+\hrfix);
      \draw[black, line width=0.6pt]
        (\cx-\hrfix,\cy-\hrfix) rectangle (\cx+\hrfix,\cy+\hrfix);
    }
    \clusterbordersV
    \node[above=4pt, font=\small\bfseries] at (10,0) {Coarsen-GNN};
    \node[below=5pt, font=\small]          at (10,\Ns) {$S_c$};
  \end{scope}

  \pgfmathsetmacro{\XF}{3*\DX}
  \begin{scope}[xshift=\XF cm]
    \fill[white] (0,0) rectangle (\Ns,\Ns);
    \pgfmathsetmacro{\cxA}{0+4/2}
    \fill[KC0] (\cxA-\hrfix,\cxA-\hrfix) rectangle (\cxA+\hrfix,\cxA+\hrfix);
    \foreach \c/\r in {
      4/4,5/5,6/6,7/7, 5/4,4/5, 6/5,5/6, 7/4,4/7}{
      \fill[KC1] (\c,\r) rectangle (\c+1,\r+1);}
    \pgfmathsetmacro{\cxC}{8+4/2}
    \fill[KC2] (\cxC-\hrfix,\cxC-\hrfix) rectangle (\cxC+\hrfix,\cxC+\hrfix);
    \pgfmathsetmacro{\cxD}{12+3/2}
    \fill[KC3] (\cxD-\hrfix,\cxD-\hrfix) rectangle (\cxD+\hrfix,\cxD+\hrfix);
    \foreach \c/\r in {
      15/15,16/16,17/17,18/18,19/19,
      16/15,15/16, 17/16,16/17, 18/17,17/18, 19/15,15/19}{
      \fill[KC4] (\c,\r) rectangle (\c+1,\r+1);}
    \foreach \csi/\cszi/\csj/\cszj/\ca/\cb in {
      0/4/4/4/KC0/KC1,   4/4/0/4/KC1/KC0,
      0/4/8/4/KC0/KC2,   8/4/0/4/KC2/KC0,
      4/4/12/3/KC1/KC3,  12/3/4/4/KC3/KC1,
      8/4/15/5/KC2/KC4,  15/5/8/4/KC4/KC2,
      12/3/15/5/KC3/KC4, 15/5/12/3/KC4/KC3%
    }{
      \pgfmathsetmacro{\cx}{\csi+\cszi/2}
      \pgfmathsetmacro{\cy}{\csj+\cszj/2}
      \fill[\ca] (\cx-\hrfix,\cy-\hrfix) rectangle (\cx,         \cy+\hrfix);
      \fill[\cb] (\cx,        \cy-\hrfix) rectangle (\cx+\hrfix,  \cy+\hrfix);
    }
    \clusterbordersV
    \node[above=4pt, font=\small\bfseries] at (10,0) {CoRe-GNN (batched)};
    \node[below=5pt, font=\small]          at (10,\Ns)
      {$S^{\mathrm{intra}}_{\mathrm{batch}} + S_c^{\mathrm{inter}}$};
  \end{scope}

\end{tikzpicture}%
}
\caption{Propagation matrices for $N=20$ nodes across 5 clusters.
  CoRe-GNN (batched): active clusters (red, violet) use full intra-cluster propagation;
  inactive clusters use a coarsened proxy (small square). The representation of the graph and its coarsened counterpart 
  can be found in Figure~\ref{fig:graph_bilevel_5cluster}.
  }

\label{fig:prop_matrices_v2}
\end{figure}

%% file: tables/planetoid_heterophilic_new.tex

\begin{table}[!h]
\centering
\caption{Test accuracy ($\%$, mean$\pm$std). 
$\mathbf{Bold}$: best. \underline{Underline}: second best.}
\scalebox{0.78}{
\begin{tabular}{lcccccc}
\toprule
 & \multicolumn{3}{c}{Planetoid} & \multicolumn{3}{c}{Heterophilic} \\
\cmidrule(lr){2-4} \cmidrule(lr){5-7}
Model & Cora & CiteSeer & PubMed & chameleon & squirrel & Amazon-ratings \\
\midrule
GCN & $81.58 \pm 0.6$ & $72.40 \pm 0.71$ & $79.11 \pm 0.08$ & $\mathbf{62.68 \pm 2.05}$ & $46.56 \pm 1.87$ & $\mathbf{47.61 \pm 0.58}$ \\
\arrayrulecolor{gray!40}\midrule\arrayrulecolor{black}
Coarsen-GNN & $\mathbf{82.03 \pm 0.3}$ & $\mathbf{73.86 \pm 0.68}$ & $78.94 \pm 0.2$ & $58.03 \pm 2.23$ & $41.31 \pm 1.04$ & $46.99 \pm 0.36$ \\
Cluster-GCN & $76.13 \pm 1.11$ & $71.31 \pm 0.67$ & $78.19 \pm 0.3$ & $58.57 \pm 1.70$ & $42.4 \pm 1.29$ & $47.02 \pm 0.64$ \\
CoRE-GNN & \underline{$81.67 \pm 0.64$} & \underline{$73.62 \pm 0.49$} & $\mathbf{79.27 \pm 0.35}$ & $60.68 \pm 1.99$ & \underline{$46.80 \pm 1.13$} & \underline{$47.45 \pm 0.61$} \\
CoRE-GNN (b) & $80.91 \pm 0.25$ & $73.06 \pm 0.44$ & \underline{$79.26 \pm 0.31$} & \underline{$61.43 \pm 0.81$} & $\mathbf{50.49 \pm 0.93}$ & $45.48 \pm 0.23$ \\
\bottomrule
\end{tabular}
}
\label{tab:results_small}
\end{table}

%% file: tables/large_city_new.tex
\begin{table}[!h]
\centering
\caption{Test accuracy ($\%$, mean$\pm$std).  
$\mathbf{Bold}$: best. \underline{Underline}: second best.}
\scalebox{0.78}{
\begin{tabular}{lccccc}
\toprule
 & \multicolumn{2}{c}{City graphs} & \multicolumn{3}{c}{Large-scale} \\
\cmidrule(lr){2-3} \cmidrule(lr){4-6}
Model & paris & london & ogbn-arxiv & Reddit & ogbn-products \\
\midrule
GCN & $32.9 \pm 1.98$ & $32.41 \pm 1.19$ & $\mathbf{71.86 \pm 0.26}$ & OOM & OOM \\
\arrayrulecolor{gray!40}\midrule\arrayrulecolor{black}
Coarsen-GNN & $35.97 \pm 0.43$ & $35.76 \pm 0.43$ & $61.36 \pm 0.12$ & \underline{$89.87 \pm 0.03$} & \underline{$70.75 \pm 0.19$} \\
Cluster-GCN & $27.43 \pm 1.6$ & $32.28 \pm 0.61 $ & \underline{$70.22 \pm 0.31$} & $89.09 \pm 0.34$ & $67.13 \pm 0.51$ \\
CoRE-GNN & \underline{$42.83 \pm 1.25$} & \underline{$41.1 \pm 0.81$} & $70.03 \pm 0.29$ & OOM & OOM \\
CoRE-GNN (b) & $\mathbf{44.35 \pm 0.45}$ & $\mathbf{44.4 \pm 0.57}$ & $69.56 \pm 0.27$ & $\mathbf{93.41 \pm 0.09}$ & $\mathbf{74.45 \pm 0.27}$ \\
\bottomrule
\end{tabular}
}
\label{tab:results_large}
\end{table}

%% file: tables/complexity.tex
\begin{table}[ht]
\centering
\caption{Time and memory complexity for different GNN training methods. $d$ denotes the hidden dimension, $K$ the number of layers, $E = \mathrm{nnz}(\matrixpropag)$ the number of edges, $N$ the number of nodes, $n = (1-r)N$ the number of super-nodes at coarsening ratio $r$, $E_c = \mathrm{nnz}(\sextrac)$ the edges in the coarsened graph, $E_{\mathrm{intra}} = \mathrm{nnz}(\sintra)$ the intra-cluster edges, $b$ the batch size in nodes and $n_{\mathrm{batch}}$ the number of batches per epoch.
}
\small
\begin{tabularx}{\textwidth}{|c|Y|Y|}
\hline
\textbf{Method} & \textbf{Time Complexity} & \textbf{Memory Complexity} \\
\hline
Classical GCN~\cite{kipf2016GCN} & $\mathcal{O}(K(Ed + Nd^2))$ & $\mathcal{O}(KNd + Kd^2)$ \\
\hline
Coarsen-GNN~\cite{joly2024graph, joly2025taxonomy} & $\mathcal{O}(K(E_cd + nd^2) + 2Nd)$ & $\mathcal{O}(Knd + Nd + Kd^2)$ \\
\hline
Cluster-GCN~\cite{chiang2019cluster} & $\mathcal{O}(K(E_{\mathrm{intra}}d + Nd^2))$ & $\mathcal{O}(Kbd + Kd^2)$ \\
\hline

CoRe-GNN & $\mathcal{O}(K((E_c + E_{\mathrm{intra}})d + (n + N)d^2 + 2Nd))$ & $\mathcal{O}(K(n + N)d + Kd^2)$ \\
\hline
CoRe-GNN (Batched) & $\mathcal{O}(K(n_{\mathrm{batch}}E_cd + E_{\mathrm{intra}}d + (n_{\mathrm{batch}}n + N)d^2))$ & $\mathcal{O}(K(n + b)d + Kd^2)$ \\
\hline
\end{tabularx}
\label{tab:complexity}
\end{table}

%% file: tables/dataset_stats.tex
\begin{table}[ht]
\centering
\caption{Dataset statistics.}
\label{tab:dataset_stats}
\begin{tabular}{lccccc}
\toprule
\textbf{Dataset} & \textbf{\# Nodes} & \textbf{\# Edges} & \textbf{\# Features} & \textbf{\# Classes} & \textbf{Type} \\
\midrule
Cora (PCC) & 2,485 & 10,138 & 1,433 & 7 & Homophilic \\
CiteSeer (PCC) & 2,120 & 7,358 & 3,703 & 6 & Homophilic \\
Pubmed & 19,717 & 88,648 & 500 & 3 & Homophilic \\
\midrule
Chameleon & 2,277 & 36,101 & 2,325 & 5 & Heterophilic \\
Squirrel & 5,201 & 217,073 & 2,089 & 5 & Heterophilic \\
Amazon Ratings & 24,492 & 93,050 & 300 & 5 & Heterophilic \\
\midrule
Reddit & 232,965 & 114,615,892 & 602 & 41 & Large scale \\
ogbn-arxiv & 169,343 & 1,166,243 & 128 & 40 & Large scale \\
ogbn-products & 2,449,029 & 61,859,140 & 100 & 47 & Large scale \\
\midrule
Paris & 114,127 & 182,511 & 37 & 10 & Long range \\
London & 568,795 & 756,502 & 37 & 10 & Long range \\
\bottomrule
\end{tabular}
\end{table}

%% file: tables/best_hp_for_table.tex

\begin{table}[!h]
\centering
\caption{Best Hyperparameters CoRE-GNN}
\scalebox{1}{
\begin{tabular}{llccccc}
\toprule
Dataset & Coarsened graph & lr & wd & hidden & layers & LN \\
\midrule
Cora & loukas r 0.5 & 0.01 & 0.001 & 256 & 2 & F \\
CiteSeer & loukas r 0.9 & 0.01 & 0 & 256 & 2 & F \\
PubMed & metis nclusters 500 & 0.01 & 0.001 & 128 & 2 & F \\
chameleon & loukas r 0.9 & 0.01 & 0.001 & 512 & 2 & F \\
squirrel & loukas r 0.3 & 0.01 & 0.0005 & 512 & 2 & F \\
Amazon-ratings & metis nclusters 50 & 0.001 & 0.001 & 512 & 2 & T \\
paris & loukas r 0.95 & 0.01 & 0 & 512 & 5 & T \\
london & graclus r 0.9 & 0.01 & 0 & 512 & 5 & T \\
ogbn-arxiv & graclus r 0.999 & 0.001 & 0.0001 & 512 & 3 & T \\
\bottomrule
\end{tabular}
}
\label{tab:hp_coregnn}
\end{table}


\begin{table}[!h]
\centering
\caption{Best Hyperparameters CoRE-GNN (b)}
\scalebox{1}{
\begin{tabular}{llcccccc}
\toprule
Dataset & Coarsened graph & lr & wd & hidden & layers & LN & batch \\
\midrule
Cora & loukas r 0.5 & 0.001 & 0.0005 & 512 & 2 & F & 5000 \\
CiteSeer & loukas r 0.9 & 0.001 & 0 & 512 & 3 & F & 5000 \\
PubMed & metis nclusters 500 & 0.001 & 0.0005 & 512 & 2 & F & 5000 \\
chameleon & loukas r 0.9 & 0.0001 & 0 & 512 & 2 & T & 5000 \\
squirrel & loukas r 0.3 & 0.0001 & 0.0005 & 256 & 2 & T & 5000 \\
Amazon-ratings & loukas r 0.5 & 0.001 & 0 & 256 & 2 & F & 5000 \\
paris & loukas r 0.9 & 0.001 & 0.0005 & 256 & 5 & T & 5000 \\
london & graclus r 0.9 & 0.001 & 0 & 256 & 5 & T & 5000 \\
ogbn-arxiv & metis nclusters 100 & 0.001 & 0.0005 & 512 & 3 & T & 5000 \\
Reddit & loukas r 0.999 & 0.001 & 0 & 256 & 4 & T & 5000 \\
ogbn-products & metis nclusters 1500 & 0.0001 & 0 & 256 & 3 & T & 5100 \\
\bottomrule
\end{tabular}
}
\label{tab:hp_coregnn_b}
\end{table}


\begin{table}[!h]
\centering
\caption{Best Hyperparameters Coarsen-GNN}
\scalebox{1}{

\begin{tabular}{llccccc}
\toprule
Dataset & Coarsened graph & lr & wd & hidden & layers & LN \\
\midrule
Cora & loukas r 0.3 & 0.01 & 0.001 & 512 & 2 & F \\
CiteSeer & loukas r 0.7 & 0.01 & 0.001 & 512 & 2 & F \\
PubMed & graclus r 0.5 & 0.001 & 0.001 & 512 & 2 & F \\
chameleon & loukas r 0.3 & 0.01 & 0.001 & 256 & 2 & F \\
squirrel & loukas r 0.3 & 0.01 & 0.001 & 256 & 2 & F \\
Amazon-ratings & loukas r 0.3 & 0.001 & 0.001 & 512 & 2 & T \\
paris & loukas r 0.9 & 0.01 & 0 & 512 & 4 & T \\
london & graclus r 0.95 & 0.01 & 0 & 512 & 6 & T \\
ogbn-arxiv & graclus r 0.9 & 0.01 & 0 & 512 & 2 & T \\
Reddit & loukas r 0.9 & 0.01 & 0 & 256 & 2 & F \\
ogbn-products & metis nclusters 1500 & 0.01 & 0 & 128 & 3 & F \\
\bottomrule
\end{tabular}
}
\label{tab:hp_coarsengnn}
\end{table}


\begin{table}[!h]
\centering
\caption{Best Hyperparameters Cluster-GCN}
\scalebox{1}{

\begin{tabular}{llcccccc}
\toprule
Dataset & Coarsened graph & lr & wd & hidden & layers & LN & batch \\
\midrule
Cora & loukas r 0.9 & 0.001 & 0.0005 & 512 & 3 & T & 5000 \\
CiteSeer & graclus r 0.95 & 0.0001 & 0.0005 & 512 & 3 & F & 5000 \\
PubMed & metis nclusters 50 & 0.001 & 0.0005 & 256 & 2 & F & 5000 \\
chameleon & graclus r 0.95 & 0.0001 & 0.0005 & 512 & 2 & T & 5000 \\
squirrel & loukas r 0.95 & 0.0001 & 0.0005 & 256 & 2 & T & 5000 \\
Amazon-ratings & metis nclusters 100 & 0.0001 & 0.0005 & 512 & 2 & T & 5000 \\
paris & loukas r 0.999 & 0.001 & 0 & 512 & 6 & F & 5000 \\
london & graclus r 0.99 & 0.001 & 0 & 256 & 6& T & 5000 \\
ogbn-arxiv & graclus r 0.999 & 0.001 & 0 & 512 & 3 & T & 5000 \\
Reddit & loukas r 0.95 & 0.001 & 0 & 512 & 2 & F & 5000 \\
ogbn-products & metis nclusters 1500 & 0.001 & 0 & 256 & 3 & T & 5000 \\
\bottomrule
\end{tabular}
}
\label{tab:hp_clustergcn}
\end{table}


\begin{table}[!h]
\centering
\caption{Best Hyperparameters GCN}
\scalebox{1}{
\begin{tabular}{lccccc}
\toprule
Dataset & lr & wd & hidden & layers & LN \\
\midrule
Cora & 0.001 & 0.0005 & 512 & 4 & F \\
CiteSeer & 0.001 & 0.0005 & 512 & 6 & F \\
PubMed & 0.001 & 0.0005 & 512 & 2 & F \\
chameleon & 0.01 & 0.0005 & 256 & 2 & F \\
squirrel & 0.01 & 0.0005 & 64 & 2 & F \\
Amazon-ratings & 0.01 & 0.0001 & 512 & 2 & F \\
paris & 0.01 & 0 & 256 & 6 & T \\
london & 0.01 & 0 & 256 & 6 & T \\
ogbn-arxiv & 0.001 & 0.0001 & 512 & 3 & T \\
\bottomrule
\end{tabular}
}
\label{tab:hp_gcn}
\end{table}

%% file: tables/hyperparameter_grid.tex
\begin{table*}[t]
\small
\caption{Hyperparameter search grids on 3 independent runs}
\label{tab:hyperparameter_grid}

\begin{minipage}[t]{0.46\textwidth}
\centering
\textbf{(a) CoRE-GNN}\\[4pt]
\begin{tabular}{lc}
\toprule
Hyperparameter & Values \\
\midrule
\multicolumn{2}{l}{\textit{Coarsening}} \\
Algorithm           & Metis, Graclus, Loukas \\
Reduction ratio $r$ & \{0.3, 0.5, 0.7, 0.9, 0.95\} \\
Metis $n_c$         & \{50, 100, 200, 500, 1000\}$^{\ddagger}$ \\
\midrule
\multicolumn{2}{l}{\textit{Training}} \\
Learning rate   & \{$10^{-2}$,\,$10^{-3}$,\,$10^{-4}$\} \\
Weight decay    & \{$0$,\,$10^{-4}$,\,$5{\times}10^{-4}$,\,$10^{-3}$\} \\
Dropout         & 0.5 \\
No.\ layers     & \{2, 3, 4\}$^{*}$ \\
Hidden dim.     & \{64, 128, 256, 512\} \\
Epochs          & 800 \\
Layer norm      & \{yes, no\}$^{\#}$ \\
\bottomrule
\end{tabular}
\end{minipage}
\hfill
\begin{minipage}[t]{0.46\textwidth}
\centering
\textbf{(b) Cluster-GCN}\\[4pt]
\begin{tabular}{lc}
\toprule
Hyperparameter & Values \\
\midrule
\multicolumn{2}{l}{\textit{Coarsening}} \\
Algorithm           & Metis, Graclus, Loukas \\
Reduction ratio $r$ & \{0.3, 0.95, 0.99, 0.999\} \\
Metis $n_c$         & \{50, 100, 1500\}$^{\ddagger}$ \\
\midrule
\multicolumn{2}{l}{\textit{Training}} \\
Learning rate   & \{$10^{-3}$,\,$10^{-4}$\} \\
Weight decay    & \{$0$,\,$5{\times}10^{-4}$\} \\
Dropout         & 0.5 \\
No.\ layers     & \{2, 3, 4, 6\} \\
Hidden dim.     & \{256, 512\} \\
Epochs          & 800 \\
Layer norm      & \{yes, no\} \\
Node batch size & 5\,000 (large only) \\
\bottomrule
\end{tabular}
\end{minipage}

\vspace{1.4em}

\centering
\textbf{(c) CoRE-GNN batched}\\[4pt]
\resizebox{\textwidth}{!}{%
\begin{tabular}{lcccc}
\toprule
Hyperparameter
  & Small / Medium
  & ogbn-arxiv / Reddit
  & ogbn-products
  & City \\
\midrule
\multicolumn{5}{l}{\textit{Coarsening}} \\
Algorithm
  & Metis,  Graclus, Loukas,
  & Metis,  Graclus, Loukas
  & Metis, Graclus
  & Metis, Graclus, Loukas \\
Reduction ratio $r$
  & \{0.3, 0.5, 0.7, 0.9\}
  & \{0.9, 0.95, 0.99, 0.999\}
  & 0.999
  & \{0.9, 0.95, 0.99, 0.999\} \\
Metis $n_c$
  & \{50,500\}
  & \{100, 500, 1500\}
  & \{500, 1500\}
  & \{100, 500, 1000, 1500\} \\
\midrule
\multicolumn{5}{l}{\textit{Training}} \\
Learning rate
  & \{$10^{-3}$, $10^{-4}$\}
  & \{$10^{-3}$, $10^{-4}$\}
  & $10^{-4}$
  & $10^{-3}$ \\
Weight decay
  & \{$0$, $5{\times}10^{-4}$\}
  & $5{\times}10^{-4}$
  & $0$
  & $5{\times}10^{-4}$ \\
Dropout         & 0.5  & 0.5  & 0.5  & 0.5  \\
No.\ layers     & \{2, 3\} & \{3, 4\} & 3 & \{2,\ldots,24\}$^{*}$ \\
Hidden dim.     & \{256, 512\} & \{256, 512\} & 256 & 256 \\
Epochs          & 800  & 800  & 800  & 800  \\
Layer norm      & \{yes, no\} & yes & yes & yes \\
Node batch size & 500 & 5\,000 & 5\,100 & 5\,000 \\
\bottomrule
\end{tabular}%
}

\vspace{1.4em}

\begin{minipage}[t]{0.46\textwidth}
\centering
\textbf{(d) Coarsen-GNN}\\[4pt]
\begin{tabular}{lc}
\toprule
Hyperparameter & Values \\
\midrule
\multicolumn{2}{l}{\textit{Coarsening}} \\
Algorithm           & Metis, Graclus, Loukas \\
Reduction ratio $r$ & \{0.3, 0.5, 0.7, 0.9, 0.95\} \\
Metis $n_c$         &  \{50, 100, 200, 500, 1000,1500\}$^{\ddagger}$ \\
\midrule
\multicolumn{2}{l}{\textit{Training}} \\
Learning rate   & \{$10^{-2}$,\,$10^{-3}$\} \\
Weight decay    & \{$0$,\,$5{\times}10^{-4}$,\,$10^{-3}$\} \\
Dropout         & 0.5 \\
No.\ layers     & \{2, 3, 4, 6\} \\
Hidden dim.     & \{256, 512\} \\
Epochs          & 800 \\
Layer norm      & \{yes, no\} \\
\bottomrule
\end{tabular}
\end{minipage}
\hfill
\begin{minipage}[t]{0.46\textwidth}
\centering
\textbf{(e) GCN}\\[4pt]
\begin{tabular}{lc}
\toprule
Hyperparameter & Values \\
\midrule
Learning rate   & \{$10^{-2}$,\,$10^{-3}$\} \\
Weight decay    & \{$0$,\,$10^{-4}$,\,$5{\times}10^{-4}$\} \\
Dropout         & 0.5 \\
No.\ layers     & \{2, 3, 4, 5, 6\}$^{*}$ \\
Hidden dim.     & \{64, 128, 256, 512\} \\
Epochs          & 800 \\
Layer norm      & \{yes, no\} \\
\bottomrule
\end{tabular}
\end{minipage}

\vspace{1.4em}

\vspace{1ex}
{
$^{\ddagger}$~$n_c$ = number of Metis communities, Graclus and Loukas use reduction ratio $r$ directly.\\
$^{\#}$~LN choice is dataset-dependent: LN$=$False for Cora/CiteSeer/PubMed/chameleon/squirrel; LN$=$True for Amazon-ratings, and all large graphs.\\
$^{*}$~Cities: CoRE-GNN and GCN tested up to 24 layers.\\
}
\end{table*}

%% file: tikz_figure/graph_bilevel_5cluster.tikz
\begin{figure}[!h]
\centering
\begin{subfigure}[c]{0.6\linewidth}
\centering
\resizebox{\linewidth}{!}{%
\begin{tikzpicture}[
  gnode/.style={circle, fill=#1, draw=#1!70!black, minimum size=6pt, inner sep=0pt},
  gnode/.default=black,
]
\definecolor{KC0}{RGB}{50,100,210}
\definecolor{KC1}{RGB}{200,65,65}
\definecolor{KC2}{RGB}{45,150,65}
\definecolor{KC3}{RGB}{215,125,0}
\definecolor{KC4}{RGB}{125,55,175}

\node[gnode=KC2] (N8)  at (2.50, 4.20) {};
\node[gnode=KC2] (N9)  at (3.18, 4.09) {};
\node[gnode=KC2] (N10) at (3.79, 3.78) {};
\node[gnode=KC2] (N11) at (4.28, 3.29) {};
\node[gnode=KC0] (N0)  at (4.59, 2.68) {};
\node[gnode=KC0] (N1)  at (4.70, 2.00) {};
\node[gnode=KC0] (N2)  at (4.59, 1.32) {};
\node[gnode=KC0] (N3)  at (4.28, 0.71) {};
\node[gnode=KC1] (N4)  at (3.79, 0.22) {};
\node[gnode=KC1] (N5)  at (3.18,-0.09) {};
\node[gnode=KC1] (N6)  at (2.50,-0.20) {};
\node[gnode=KC1] (N7)  at (1.82,-0.09) {};
\node[gnode=KC3] (N12) at (1.21, 0.22) {};
\node[gnode=KC3] (N13) at (0.72, 0.71) {};
\node[gnode=KC3] (N14) at (0.41, 1.32) {};
\node[gnode=KC4] (N15) at (0.30, 2.00) {};
\node[gnode=KC4] (N16) at (0.41, 2.68) {};
\node[gnode=KC4] (N17) at (0.72, 3.29) {};
\node[gnode=KC4] (N18) at (1.21, 3.78) {};
\node[gnode=KC4] (N19) at (1.82, 4.09) {};

\draw[KC0, thick] (N1)--(N0) (N2)--(N1) (N3)--(N0);
\draw[KC1, thick] (N5)--(N4) (N6)--(N5) (N7)--(N4);
\draw[KC2, thick] (N9)--(N8) (N10)--(N9) (N11)--(N8);
\draw[KC3, thick] (N13)--(N12) (N14)--(N13);
\draw[KC4, thick] (N16)--(N15) (N17)--(N16) (N18)--(N17) (N19)--(N15);

\draw[gray!55, thin]
  (N5)--(N1) (N4)--(N3) (N6)--(N2)
  (N9)--(N0) (N8)--(N2) (N10)--(N3)
  (N12)--(N6) (N14)--(N7) (N13)--(N5)
  (N15)--(N10) (N17)--(N11) (N19)--(N9) (N16)--(N8)
  (N16)--(N12) (N18)--(N14) (N15)--(N13);
\end{tikzpicture}%
}
\end{subfigure}
\hfill
\begin{subfigure}[c]{0.34\linewidth}
\centering
\resizebox{\linewidth}{!}{%
\begin{tikzpicture}[
  snode/.style={circle, fill=#1, draw=#1!70!black, minimum size=14pt, inner sep=0pt},
  snode/.default=black,
]
\definecolor{KC0}{RGB}{50,100,210}
\definecolor{KC1}{RGB}{200,65,65}
\definecolor{KC2}{RGB}{45,150,65}
\definecolor{KC3}{RGB}{215,125,0}
\definecolor{KC4}{RGB}{125,55,175}

\node[snode=KC2] (S2) at (3.32, 3.60) {};
\node[snode=KC0] (S0) at (4.28, 1.72) {};
\node[snode=KC1] (S1) at (2.78, 0.22) {};
\node[snode=KC3] (S3) at (1.04, 0.94) {};
\node[snode=KC4] (S4) at (1.04, 3.06) {};

\draw[gray!65, line width=2pt]
  (S0)--(S1) (S0)--(S2) (S1)--(S3) (S2)--(S4) (S3)--(S4);
\end{tikzpicture}%
}
\end{subfigure}
\caption{Illustration of the coarsening on the same 5-cluster partition as Figure~\ref{fig:prop_matrices_v2}. Left: original graph ($N=20$ nodes). Right: coarsened graph.}
\label{fig:graph_bilevel_5cluster}
\end{figure}